\newif\iffull
\newif\ifarxiv
\newcommand*{\citet}[1]{\AtNextCite{\AtEachCitekey{\defcounter{maxnames}{2}}} \textcite{#1}}
\newcommand*{\citep}[1]{\cite{#1}}
\newif\ifnotes
        \newcommand{\mnote}[1]{{\sf \textcolor{blue}{#1}}}
		\newcommand{\mnote}[1]{}
\providecommand{\K}{{\mathcal K}}
\providecommand{\F}{{\mathcal F}}
\providecommand{\cE}{{\mathcal E}}
\providecommand{\cP}{{\mathcal P}}
\providecommand{\cQ}{{\mathcal Q}}
\newcommand{\proj}{\mathrm{Project}}
\providecommand{\softmax}{\mathrm{stablemax}}
\newcommand{\ind}{\mathbbm{1}}
\newcommand{\bs}{{\bar s}}
\title{Generalization Bounds for Uniformly Stable Algorithms}
\author{Vitaly Feldman \\Google Brain \and Jan Vondrak \\ Stanford University}
\author{Vitaly Feldman \\Google Brain \And Jan Vondrak \\ Stanford University}
\begin{document}

\iffull
\date{}
\fi

\maketitle

\begin{abstract}
  Uniform stability of a learning algorithm is a classical notion of algorithmic stability introduced to derive high-probability bounds on the generalization error (Bousquet and Elisseeff, 2002). 
  Specifically, for a loss function with range bounded in $[0,1]$, the generalization error of a $\gamma$-uniformly stable learning algorithm on $n$ samples is known to be within $O((\gamma +1/n) \sqrt{n \log(1/\delta)})$ of the empirical error with probability at least $1-\delta$. Unfortunately, this bound does not lead to meaningful generalization bounds in many common settings where $\gamma \geq 1/\sqrt{n}$. At the same time the bound is known to be tight only when $\gamma = O(1/n)$.

  We substantially improve generalization bounds for uniformly stable algorithms without making any additional assumptions. First, we show that the bound in this setting is $O(\sqrt{(\gamma + 1/n) \log(1/\delta)})$ with probability at least $1-\delta$. In addition, we prove a tight bound of $O(\gamma^2 + 1/n)$ on the second moment of the estimation error. The best previous bound on the second moment is $O(\gamma + 1/n)$. Our proofs are based on new analysis techniques and our results imply substantially stronger generalization guarantees for several well-studied algorithms.
\end{abstract}

\section{Introduction}
\iffull \footnotetext{Earlier version appeared in Neural Information Processing Systems, 2018.}\fi
We consider the basic problem of estimating the generalization error of learning algorithms. Over the last couple of decades, a remarkably rich and deep theory has been developed for bounding the generalization error via notions of complexity of the class of models (or predictors) output by the learning algorithm. At the same time, for a variety of learning algorithms this theory does not provide satisfactory bounds (even as compared with other theoretical analyses). Most notable among these are continuous optimization algorithms that play the central role in modern machine learning. For example, the standard generalization error bounds for stochastic gradient descent (SGD) on convex Lipschitz functions cannot be obtained by proving uniform convergence for all empirical risk minimizers (ERM) \citep{ShwartzSSS10,Feldman:16erm}. Specifically, there exist empirical risk minimizing algorithms whose generalization error is $\sqrt{d}$ times larger than the generalization error of SGD, where $d$ is the dimension of the problem (without the Lipschitzness assumption the gap is infinite even for $d=2$) \citep{Feldman:16erm}. This disparity stems from the fact that uniform convergence bounds largely ignore the way in which the model output by the algorithm depends on the data. We note that in the restricted setting of generalized linear models one can obtain tight generalization bounds via uniform convergence \cite{KakadeST:08}.

Another classical and popular approach to proving generalization bounds is to analyze the stability of the learning algorithm to changes in the dataset. This approach has been used to obtain relatively strong generalization bounds for several convex optimization algorithms. For example, the seminal works of \citet{BousquettE02} and \citet{ShwartzSSS10} demonstrate that for strongly convex losses the ERM solution is stable. The use of stability is also implicit in standard analyses of online convex optimization \citep{ShwartzSSS10} and online-to-batch conversion \citep{Cesa-BianchiCG04}. More recently, \citet{HardtRS16} showed that for convex smooth losses the solution obtained via (stochastic) gradient descent is stable. They also conjectured that stability can be used to understand the generalization properties of algorithms used for training deep neural networks.

While a variety of notions of stability have been proposed and analyzed, most only lead to bounds on the expectation or the second moment of the estimation error over the random choice of the dataset (where estimation error refers to the difference between the true generalization error and the empirical error). In contrast, generalization bounds based on uniform convergence show that the estimation error is small with high probability (more formally, the distribution of the error has exponentially decaying tails).
This discrepancy was first addressed by \citet{BousquettE02} who defined the notion of {\em uniform stability}.
\begin{defn}
Let $A\colon Z^n \to \F$ be a learning algorithm mapping a dataset $\bs$ to a model in $\F$ and $\ell:\F \times Z \to \R$ be a function such that $\ell(f,z)$ measures the loss of model $f$ on point $z$. Then $A$ is said to have uniform stability $\gamma$ with respect to $\ell$ if for any pair of datasets $\bs,\bs'\in Z^n$ that differ in a single element and every $z\in Z$,
$|\ell(A(\bs),z)-\ell(A(\bs'),z)| \leq \gamma$.
\end{defn}
We denote the empirical loss of the algorithm $A$ on a dataset $\bs=(s_1,\ldots,s_n)$ by $\cE_\bs[\ell(A(\bs))] \doteq \fr{n} \sum_{i =1}^n \ell(A(\bs),s_i)$ and its expected loss relative to distribution $\cP$ over $Z$ by $\cE_\cP [\ell(A(\bs))] \doteq \E_{z\sim \cP}[\ell(A(\bs),z)]$. We denote the estimation error\footnote{Also referred to as the {\em generalization gap} is several recent works.} of $A$ on $\bs$ relative to $\cP$ by $$ \Delta_{\bs}(\ell(A)) \doteq  \cE_\cP[\ell(A(\bs))] - \cE_\bs[\ell(A(\bs))] .$$

We summarize the generalization properties of uniform stability in the below (all proved in \citep{BousquettE02} although properties (1) and (2) are implicit in earlier work and also hold under weaker stability notions).
Let $A\colon Z^n \to \F$ be a learning algorithm that has uniform stability $\gamma$ with respect to a loss function $\ell:\F\times Z \to [0,1]$. Then for every distribution $\cP$ over $Z$ and $\delta >0$:
 \equ{\left| \E_{\bs\sim \cP^n}\lb \Delta_{\bs}(\ell(A)) \rb \right| \leq \gamma \label{eq:exp};}
 \equ{ \E_{\bs\sim \cP^n}\lb \lp \Delta_{\bs}(\ell(A)) \rp^2 \rb \leq \fr{2n} + 6\gamma \label{eq:var};}
 \equ{ \pr_{\bs\sim \cP^n}\lb \Delta_{\bs}(\ell(A)) \geq \lp 4\gamma  + \frac{1}{n} \rp \sqrt{\frac{n \ln (1/\delta)}{2}} + 2\gamma \rb \leq  \delta \label{eq:hp}.}
As can be readily seen from eq.\eqref{eq:hp} the high probability bound is at least a factor $\sqrt{n}$ larger than the expectation of the estimation error. In addition, the bound on the estimation error implied by eq.\eqref{eq:var} is quadratically worse than the stability parameter. We note that eq.~\eqref{eq:exp} does not imply that $\cE_\cP [\ell(A(\bs))] \leq \cE_\bs[\ell(A(\bs))] + O(\gamma/\delta)$ with probability at least $1-\delta$ since $\Delta_{\bs}(\ell(A))$ can be negative and Markov's inequality cannot be used. Such ``low-probability'' result is known only for ERM algorithms for which \citet{ShwartzSSS10} showed that
\equ{\E_{\bs\sim \cP^n}\lb \left|\Delta_{\bs}(\ell(A))\right| \rb  \leq O\lp\gamma + \fr{\sqrt{n}}\rp \label{eq:first-moment}}

Naturally, for most algorithms the stability parameter needs be balanced against the guarantees on the empirical error. For example, ERM solution to convex learning problems can be made uniformly stable by adding a strongly convex term to the objective \citep{ShwartzSSS10}. This change in the objective introduces an error. In the other example, the stability parameter of gradient descent on smooth objectives is determined by the sum of the rates used for all the gradient steps \citep{HardtRS16}. Limiting the sum limits the empirical error that can be achieved. In both of those examples the optimal expected error can only be achieved when $\gamma = \theta(1/\sqrt{n})$ (which is also the excess loss of the solutions). Unfortunately, in this setting, eq.~\eqref{eq:hp} gives a vacuous bound and only ``low-probability'' generalization bounds are known for the first example (since it is ERM and eq.~\eqref{eq:first-moment} applies).

This raises a natural question of whether the known bounds in eq.~\eqref{eq:var} and eq.~\eqref{eq:hp} are optimal. In particular, \citet{ShwartzSSS10} conjecture that better high probability bounds can be achieved. It is easy to see that the expectation of the absolute value of the estimation error can be at least $\gamma + \fr{\sqrt{n}}$. Consequently, as observed already in \citep{BousquettE02}, eq.~\eqref{eq:hp} is optimal when $\gamma = O(1/n)$. (Note that this is the optimal level of stability for non-trivial learning algorithms with $\ell$ normalized to $[0,1]$.) Yet both bounds in eq.~\eqref{eq:var} and eq.\eqref{eq:hp} are significantly larger than this lower bound whenever $\gamma = \omega(1/n)$. At the same time, to the best of our knowledge, no other upper or lower bounds on the estimation error of uniformly stable algorithms were previously known.

\subsection{Our Results}
We give two new upper bounds on the estimation error of uniformly stable learning algorithms. Specifically, our bound on the second moment of the estimation error is $O(\gamma^2 +1/n)$ matching (up to a constant) the simple lower bound of $\gamma + \fr{\sqrt{n}}$ on the first moment. Our high probability bound improves the rate from $\sqrt{n} (\gamma  + 1/n)$ to $\sqrt{\gamma  + 1/n}$. This rate is non-vacuous for any non-trivial stability parameter $\gamma = o(1)$ and matches the rate that was previously known only for the second moment (eq.~\eqref{eq:var}).

For convenience and generality we state our bounds on the estimation error for arbitrary data dependent functions (and not just losses of models). Specifically, let $M \colon Z^n \times Z \to \R$ be an algorithm that is given a dataset $\bs$ and a point $z$ as an input. It can be thought of as computing a real-valued function $M(\bs,\cdot)$ and then applying it to $z$. In the case of learning algorithms $M(\bs,z) = \ell(A(\bs),z)$ but this notion also captures other data statistics whose choice may depend on the data. We denote the empirical mean $\cE_\bs[M(\bs)] \doteq \fr{n}\sum_{i=1}^n M(\bs,s_i)$, expectation relative to distribution $\cP$ over $Z$ by $\cE_\cP [M(\bs)] \doteq \E_{z\sim \cP}[M(\bs,z)]$ and the estimation error by $$ \Delta_{\bs}(M) \doteq  \cE_\cP[M(\bs)] - \cE_\bs[M(\bs)] .$$ Uniform stability for data-dependent functions is defined analogously (Def.~\ref{def:stability}).
\begin{thm}
\label{thm:main}
Let $M:Z^n \times Z \to [0,1]$ be a data-dependent function with uniform stability $\gamma$. Then for any probability distribution $\cP$ over $Z$ and any $\delta \in (0,1)$:
  \equ{\E_{\bs \sim \cP^n} \lb \lp \Delta_{\bs}(M)\rp^2 \rb \leq 16\gamma^2 + \frac{2}{n}; \label{eq:our-var}}
  \equ{\pr_{\bs \sim \cP^n} \lb \Delta_{\bs}(M) \geq  8 \sqrt{\lp 2\gamma + \frac{1}{n}\rp \cdot \ln (8/\delta)} \rb \leq \delta .\label{eq:our-hp}}
\end{thm}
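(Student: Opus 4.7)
Write $\Delta_\bs(M) = \fr{n}\sum_{i=1}^n Y_i$, where $Y_i := \cE_\cP[M(\bs)] - M(\bs, s_i)$ satisfies $|Y_i|\leq 1$. The $Y_i$ are dependent through $\bs$, but uniform stability gives quantitative control over how strongly $Y_i$ depends on any individual coordinate $s_j$, which is exactly what lets us treat the $Y_i$ as ``almost independent, almost mean-zero'' random variables.

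For eq.~\eqref{eq:our-var}, I would expand $\E[\Delta_\bs(M)^2] = \fr{n^2}\sum_{i,j}\E[Y_iY_j]$. The diagonal contributes at most $1/n$ from $|Y_i|\leq 1$. For the off-diagonal terms $i\neq j$, introduce independent ghost samples $s'_i, s'_j\sim \cP$, form the doubly-decoupled dataset $\bs^* = \bs[i\to s'_i,\, j\to s'_j]$, and define $Y_i^* := M(\bs^*, s_i) - \E_z M(\bs^*, z)$ and $Y_j^*$ analogously. Conditional on $\bs^*$, the pair $(s_i, s_j)$ is i.i.d.\ and independent of $\bs^*$, so $\E[Y_i^* Y_j^* \mid \bs^*] = 0$. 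Stability gives $|Y_i - Y_i^*|, |Y_j - Y_j^*| \leq 4\gamma$. The main obstacle is to argue, using the identity $Y_iY_j - Y_i^*Y_j^* = (Y_i-Y_i^*)(Y_j-Y_j^*) + (Y_i-Y_i^*)Y_j^* + Y_i^*(Y_j-Y_j^*)$, that the asymmetric cross-terms are $O(\gamma^2)$ rather than the naive $O(\gamma)$. This should follow by conditioning on an appropriate sub-sigma-field, e.g.\ $(\bs_{-\{i,j\}}, s'_i, s'_j, s_i)$, so that $Y_j^*$ becomes a mean-zero function of the single free coordinate $s_j$, while $Y_i - Y_i^*$ is a function of $s_j$ of oscillation $O(\gamma)$; a covariance/Cauchy--Schwarz estimate then delivers the $O(\gamma^2)$ bound. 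Summed over the $n(n-1)$ off-diagonal pairs this yields the stated variance bound.

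For eq.~\eqref{eq:our-hp}, the target rate $\sqrt{(2\gamma + 1/n)\ln(8/\delta)}$ corresponds to a subgaussian-type tail with variance proxy $O(\gamma + 1/n)$, which is coarser than the variance bound above because the pointwise $L^\infty$ sensitivity of $\Delta_\bs$ to a single coordinate is $\sim \gamma$, not $\sim \gamma^2$. Since direct McDiarmid loses a factor of $\sqrt{n}$, the plan is the moment method: establish a higher-moment bound of the form $\E[\Delta_\bs(M)^{2k}] \leq (Ck(\gamma + 1/n))^k$ for every positive integer $k$, then apply Markov with $k \approx \ln(1/\delta)$, which turns such a moment bound into the claimed exponential tail. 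A natural route is to iterate the ghost-sample decoupling used for the variance: in the expansion of $\E[\prod_{\ell=1}^{2k} Y_{i_\ell}]$, process the indices that appear only once one at a time, each step replacing an $s_i$ by a ghost and incurring a multiplicative $O(\gamma + 1/n)$ stability error while turning one factor into a conditional-mean-zero quantity; indices appearing multiple times contribute a $1/n$ per repetition. The main obstacle will be the combinatorial bookkeeping across multi-indices, ensuring that the per-step errors aggregate to exactly $(Ck(\gamma + 1/n))^k$ without losing extra factors of $n$ or $k$. A fallback would be a Bernstein-type concentration inequality whose variance input is supplied by eq.~\eqref{eq:our-var}.
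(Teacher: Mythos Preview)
\textbf{Second moment, eq.~\eqref{eq:our-var}.} Your plan is close in spirit to the paper's, and you correctly identify the crux: the asymmetric cross-terms $\E[(Y_i-Y_i^*)Y_j^*]$ must be $O(\gamma^2)$, not the naive $O(\gamma)$. But the resolution you sketch does not deliver this. Conditioning on $(\bs_{-\{i,j\}},s'_i,s'_j,s_i)$, the variable $Y_j^*$ is indeed mean-zero in $s_j$ and $Y_i-Y_i^*$ has oscillation $O(\gamma)$ in $s_j$; however that combination only yields $|\E_{s_j}[(Y_i-Y_i^*)Y_j^*]|\le O(\gamma)\cdot\E_{s_j}|Y_j^*| = O(\gamma)$, since $|Y_j^*|$ is merely $O(1)$. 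Summed over $n(n-1)$ pairs and divided by $n^2$ this reproduces the old $O(\gamma)$ bound on the variance, not $O(\gamma^2)$. The paper's fix is to center not at your $Y_i^*$ (which still depends on $s_i$) but at a function $g(\cdot)$ that depends \emph{only} on $\bs_{-\{i,j\}}$: concretely $g(z)=\min_{z_i,z_j}L(\bs^{i,j\gets z_i,z_j},z)+\gamma$. Then the asymmetric term $\E[g(s_i)\,L(\bs^{j\gets z},s_j)]$ vanishes \emph{exactly} (pull $g(s_i)$ out of $\E_{s_j}$ and use unbiasedness), the product $(L_1-g(s_i))(L_2-g(s_j))$ is pointwise $\le\gamma^2$, and the leftover $-\E[g(s_i)g(s_j)]=-(\E g)^2$ is nonpositive. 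The paper also first passes to the leave-one-out estimator $\E_{z}\fr{n}\sum_i L(\bs^{i\gets z},s_i)$, which is within $\gamma$ of $\cE_\bs[L(\bs)]$ and makes this centering clean because $L(\bs^{i\gets z},\cdot)$ no longer depends on $s_i$. Your ghost-sample framework can be repaired along these lines, but not by the covariance/Cauchy--Schwarz step you wrote.

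\textbf{High-probability bound, eq.~\eqref{eq:our-hp}.} Here your route is genuinely different from the paper's. The paper does \emph{not} compute higher moments. It first reduces the tail bound to controlling $\E[\max\{0,\Delta_1,\ldots,\Delta_m\}]$ for $m\approx 1/\delta$ i.i.d.\ copies (Lemma~\ref{lem:max-to-tail}), and then bounds this expected maximum by running the exponential mechanism on the scores $f_\ell(S)=\Delta_{S_\ell}(M)$: each $f_\ell$ has sensitivity $2\gamma+1/n$, so the selection is $\eps$-differentially private, and Lemma~\ref{lem:multi-sample-generalize} shows that any $\eps$-DP index selection satisfies $\E[\Delta_{S_\ell}(M_\ell)]\le e^\eps-1+\gamma$; adding the $\frac{2(2\gamma+1/n)}{\eps}\ln m$ approximation error of the mechanism and optimizing $\eps$ gives the bound. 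Your moment-method plan is plausible in principle, but the iterated decoupling you propose inherits, at every level $k$, the same cross-term difficulty that already fails for $k=1$ above, so it cannot proceed until that is fixed, and the higher-$k$ combinatorics are not routine even then. Your Bernstein fallback does not work either: Bernstein-type inequalities for functions of independent coordinates need both a variance proxy and a per-coordinate sensitivity bound, and it is the latter (of size $2\gamma+1/n$ across $n$ coordinates) that produces the unwanted $\sqrt{n}$ factor in the McDiarmid route; supplying the sharper variance $O(\gamma^2+1/n)$ from eq.~\eqref{eq:our-var} does not remove it.
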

The results in Theorem \ref{thm:main} are stated only for deterministic functions (or algorithms). They can be extended to randomized algorithms in several standard ways \citep{ElisseeffEP05,ShwartzSSS10}. If $M$ is uniformly $\gamma$-stable with high probability over the choice of its random bits then one can obtain a statement which holds with high probability over the choice of both $\bs$ and the random bits (e.g.~\citep{London17}). Alternatively, one can always consider the function $M'(\bs,z)=\E_M[M(\bs,z)]$. If $M'(\bs,z)$ is uniformly $\gamma$-stable then Thm.~\ref{thm:main} can be applied to it. The resulting statement will be only about the expected value of the estimation error with expectation taken over the randomness of the algorithm. Further, if $M$ is used with independent randomness in each evaluation of $M(\bs,s_i)$ then the empirical mean $\cE_\bs[M(\bs)]$ will be strongly concentrated around $\cE_\bs[M'(\bs)]$ (whenever the variance of each evaluation is not too large). We note that randomized algorithms also allow to extend the notion of uniform stability to binary classification algorithms by considering the expectation of the $0/1$ loss (e.g.~Sec.~\ref{sec:dp-api}).

A natural and, we believe, important question left open by our work is whether the high probability result in eq.~\eqref{eq:our-hp} is tight. Subsequent work by the authors \citep{FeldmanV:19} improves this bound to a nearly optimal bound of
  \equ{\pr_{\bs \sim \cP^n} \lb \Delta_{\bs}(M) \geq  c \lp \gamma \log (n) \log (n/\delta) + \frac{\sqrt{\log(1/\delta)}}{\sqrt n}\rp \rb \leq \delta} for some constant $c> 0$. Note that the dependence on $\delta$ in this bound in slightly worse and it does not subsume eq.~\eqref{eq:our-var} due to the additional logarithmic factors. Proving tight (up to a constant) bounds remains an open problem.
\paragraph{Our techniques}
The high-probability generalization result in \citep{BousquettE02} (eq.~\eqref{eq:hp}) is based on a simple observation that as a function of $\bs$, $\Delta_{\bs}(M)$ has the bounded differences property. Replacing any element of $\bs$ can change $\Delta_{\bs}(M)$ by at most $2\gamma + 1/n$ (where $\gamma$ comes from changing the function $M(\bs,\cdot)$ to $M(\bs',\cdot)$ and $1/n$ comes the change in one of the points on which this function is evaluated). Applying McDiarmid's concentration inequality immediately implies concentration with rate $\sqrt{n}(2\gamma + 1/n)$ around the expectation. The expectation, in turn, is small by eq.~\eqref{eq:exp}. In contrast, our approach uses stability itself as a tool for proving concentration inequalities. It is based on ideas developed in \cite{BassilyNSSSU16} to prove generalization bounds for differentially private algorithms in the context of adaptive data analysis \cite{DworkFHPRR14:arxiv}. It was recently shown that this proof approach can be used to re-derive and extend several standard concentration inequalities \citep{SteinkeU17subg,NissimS17}.

At a high level, the first step of the argument reduces the task of proving a bound on the tail of a non-negative real-valued random variable to bounding the expectation of the maximum of multiple independent samples of that random variable. We then show that from multiple executions of $M$ on independently chosen datasets it is possible to select the execution of $M$ with approximately the largest estimation error in a stable way. That is, uniform stability of $M$ allows us to ensure that the selection procedure is itself uniformly stable. The selection procedure is based on the exponential mechanism \citep{McSherryTalwar:07} and satisfies differential privacy \citep{DworkMNS:06}(Def.~\ref{def:DP}). The stability of this procedure allows us to bound the expectation of the estimation error of the execution of $M$ with approximately the largest estimation error (among the multiple executions). This gives us the desired bound on the expectation of the maximum of multiple independent samples of the estimation error random variable. We remark that the multiple executions and an algorithm for selecting among them exist purely for the purposes of the proof technique and do not require any modifications to the algorithm itself.

Our approach to proving the bound on the second moment of the estimation error is based on two ideas. First we decouple the point on which each $M(\bs)$ is estimated from $\bs$ by observing that for every dataset $\bs$ the empirical mean is within $2\gamma$ of the ``leave-one-out" estimate of the true mean. Specifically, our leave-one-out estimator is defined as $\E_{z\sim \cP} \lb \fr{n}\sum_{i=1}^n M(\bs^{i\gets z},s_i)\rb$, where $\bs^{i\gets z}$ denotes replacing the element in $\bs$ at index $i$ with $z$. We then bound the second moment of the estimation error of the leave-one-out estimate by bounding the effect of dependence between the random variables by $O(\gamma^2 + 1/n)$.
\paragraph{Applications}
We now apply our bounds on the estimation error to several known uniformly stable algorithms in a straightforward way. Our main focus are learning problems that can be formulated as stochastic convex optimization. Specifically, these are problems in which the goal is to minimize the expected loss: $F_\cP(w) \doteq \E_{z\sim \cP}[\ell(w,z)]$ over $w \in \K \subset \R^d$ for some convex body $\K$ and a family of convex losses $\F =\{\ell(\cdot,z)\}_{z\in Z}$. The stochastic convex optimization problem for a family of losses $\F$ over $\K$ is the problem of minimizing $F_\cP(w)$ for an arbitrary distribution $\cP$ over $Z$. 

For concreteness, we consider the well-studied setting in which $\F$ contains $1$-Lipschitz convex functions with range in $[0,1]$ and $\K$ is included in the unit ball. In this case ERM with a strongly convex regularizer $\frac{\lambda}{2}\|w\|^2$ has uniform stability of $1/(\lambda n)$ \citep{BousquettE02,ShwartzSSS10}. From here, applying Markov's inequality to eq.~\eqref{eq:first-moment}, \citet{ShwartzSSS10} obtain a ``low-probability" generalization bound for the solution. Their bound on the true loss is within $O(1/\sqrt{\delta n})$ from the optimum with probability at least $1-\delta$. Applying eq.~\eqref{eq:our-var} with Chebyshev's inequality improves the dependence on $\delta$ quadratically, that is to $O(1/(\delta^{1/4} \sqrt{n}))$. Further, using eq.~\eqref{eq:our-var} we obtain that for an appropriate choice of $\lambda$, the excess loss is at most $O(\sqrt{\log(1/\delta)}/n^{1/3})$.

Another algorithm that was shown to be uniformly stable is gradient descent\footnote{The analysis in \citep{HardtRS16} focuses on the stochastic gradient descent and derives uniform stability for the expectation of the loss (over the randomness of the algorithm). However their analysis applies to gradient steps on smooth functions more generally.} on sufficiently smooth convex functions \citep{HardtRS16}. The generalization bounds we obtain for this algorithm are similar to those we get for the strongly convex ERM. We note that for the stability-based analysis in this case even ``low-probability" generalization bounds were not known for the optimal rate of $1/\sqrt{n}$.

Finally, we show that our results can be used to improve the recent bounds on estimation error of learning algorithms with differentially private prediction. These are algorithms introduced to model privacy-preserving learning in the settings where users only have black-box access to the learned model via a prediction interface \citep{DworkFeldman18} (see Def.~\ref{def:private-prediction}).
The properties of differential privacy imply that the expectation over the randomness of a predictor $K\colon (X\times Y)^n \times X$ of the loss of $K$ at any point $x \in X$ is uniformly stable. Specifically, for an $\eps$-differentially private prediction algorithm, every loss function $\ell\colon Y\times Y \to [0,1]$, two datasets $\bs,\bs'\in (X\times Y)^n$ that differ in a single element and $(x,y) \in X\times Y$:
$$\left| \E_K[\ell(K(\bs,x),y)] - \E_M[\ell(K(\bs',x),y)] \right| \leq e^\eps-1 .$$
Therefore, our generalization bounds can be directly applied to the data-dependent function $M(\bs, (x,y)) \doteq \E_K[\ell(K(\bs,x),y)]$. These bounds can, in turn, be used to get stronger generalization bounds for one of the learning algorithms proposed in \citep{DworkFeldman18} (that has unbounded model complexity).

Additional details of these applications can be found in Section~\ref{sec:apps}.

\subsection{Additional related work}
The use of stability for understanding of generalization properties of learning algorithms dates back to the pioneering work of \citet{RogersWagner78}. They showed that expected sensitivity of a classification algorithm to changes of individual examples can be used to obtain a bound on the variance of the leave-one-out estimator for the $k$-NN algorithm. Early work on stability focused on extensions of these results to other ``local'' algorithms and estimators and focused primarily on variance (a notable exception is \citep{DevroyeW79} where high probability bounds on the generalization error of $k$-NN are proved). See \citep{DevroyeGL:96book} for an overview. In a somewhat similar spirit, stability is also used for analysis of the variance of the $k$-fold cross-validation estimator \citep{BlumKL99,KaleKV11,KumarLVV13}.

A long line of work focuses on the relationship between various notions of stability and learnability in supervised setting (see  \citep{PoggioRMN04,ShwartzSSS10} for an overview). This work employs relatively weak notions of average stability and derives a variety of asymptotic equivalence results. The results in \citep{BousquettE02} on uniform stability and their applications to generalization properties of strongly convex ERM algorithms have been extended and generalized in several directions (e.g.~\citep{Zhang03,WibisonoRP09,LiuLNT17}). \citet{Maurer17} considers generalization bounds for a special case of linear regression with a strongly convex regularizer and a sufficiently smooth loss function. Their bounds are data-dependent and are potentially stronger for large values of the regularization parameter (and hence stability). However the bound is vacuous when the stability parameter is larger than $n^{-1/4}$ and hence is not directly comparable to ours. Recent work of \citet{Abou-MoustafaS18} and \citet{celisse2016stability} gives high-probability generalization bounds similar to those in \citep{BousquettE02} but using a bound on a high-order moment of stability instead of the uniform stability.\citet{KuzborskijL18} give data-dependent generalization bounds for SGD on smooth convex and non-convex losses based on stability. They use on-average stability that does not imply generalization bounds with high probability.  We also remark that all these works are based on techniques different from ours.

Uniform stability plays an important role in privacy-preserving learning since a differentially private learning algorithm can usually be obtained one by adding noise to the output of a uniformly stable one (e.g.~\citep{ChaudhuriMS11,wu2017bolt,DworkFeldman18}).

\section{Preliminaries}
For a domain $Z$, a dataset $\bs\in Z^n$ is an $n$-tuple of elements in $Z$. We refer to element with index $i$ by $s_i$ and by $\bs^{i\gets z}$ to the dataset obtained from $\bs$ by setting the element with index $i$ to $z$.  We refer to a function that takes as an input a dataset $\bs \in Z^n$ and a point $z \in Z$ as a {\em data-dependent function} over $Z$. We think of data-dependent functions as outputs of an algorithm that takes $\bs$ as an input. For example in supervised learning $Z$ is the set of all possible labeled examples $Z= X\times Y$ and the algorithm $M$ is defined as estimating some loss function $\ell_Y: Y\times Y \to \R_+$ of the model $h_\bs$ output by a learning algorithm $A(\bs)$ on example $z=(x,y)$. That is $M(\bs,z) = \ell_Y(h_\bs(x),y)$. Note that in this setting
$\cE_\cP [M(\bs)]$ is exactly the true loss of $h_\bs$ on data distribution $\cP$, whereas $\cE_\bs[M(\bs)]$ is the empirical loss of $h_\bs$.
\begin{defn}
\label{def:stability}
A data-dependent function $M\colon Z^n\times Z \to \R$ has uniform stability $\gamma$ if for all $\bs\in Z^n$, $i\in [n]$, $z_i,z\in Z$, $|M(\bs,z) - M(\bs^{i\gets z_i},z)| \leq \gamma$.
\end{defn}
This definition is equivalent to having $M(\bs,z)$ having {\em sensitivity} $\gamma$ or $\gamma$-bounded differences for all $z \in Z$.
\begin{defn}
A real-valued function $f:Z^n \to \R$ has sensitivity at most $\gamma$ if for all $\bs\in Z^n$, $i\in [n]$, $z_i,z\in Z$, $|f(\bs)-f(\bs^{i\gets z_i})| \leq \gamma$.
\end{defn}

We will also rely on several elementary properties of differential privacy \citep{DworkMNS:06}. In this context differential privacy is simply a form of uniform stability for randomized algorithms.
\begin{defn}[\citep{DworkMNS:06}]\label{def:DP}
An algorithm $A : Z^n \to Y$ is $\eps$-differentially private if, for all datasets $\bs,\bs' \in Z^n$ that differ on a single element, $$\forall E \subseteq Y ~~~~~ \pr[A(\bs) \in E] \leq e^\eps \pr[A(\bs')\in E].$$
\end{defn}

\section{Generalization with Exponential Tails}
Our approach to proving the high-probability generalization bounds is based on the technique introduced by \citet{NissimS15} (see \citep{BassilyNSSSU16}) to show that differentially private algorithm have strong generalization properties. It has recently been pointed out by \citet{SteinkeU17subg} that this approach can be used to re-derive the standard Bernstein, Hoeffding, and Chernoff concentration inequalities. \citet{NissimS17} used the same approach to generalize McDiarmid's inequality to functions with unbounded (or high) sensitivity.

We prove a bound on the tail of a random variable by bounding the expectation of the maximum of multiple independent samples of the random variable. Specifically, the following simple lemma (see \citep{SteinkeU17subg} for proof):
\begin{lem}
\label{lem:max-to-tail}
Let $\cQ$ be a probability distribution over the reals. Then
$$\pr_{v\sim \cQ}\lb v \geq 2 \cdot  \E_{v_1,\ldots,v_m \sim \cQ} \lb \max\{0,v_1,v_2,\ldots,v_m\}\rb \rb \leq \frac{\ln(2)}{m} .$$
\end{lem}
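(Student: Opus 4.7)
The plan is to exploit the asymmetry between a single sample and the maximum of $m$ independent samples via a standard Markov-style reduction, which is exactly the kind of "boost a tail bound through independent repetition" trick that underlies several concentration arguments cited just above the lemma.

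First I would set $t \doteq 2 \cdot \E_{v_1,\ldots,v_m \sim \cQ}[\max\{0,v_1,\ldots,v_m\}]$ and $p \doteq \Pr_{v \sim \cQ}[v \geq t]$, and focus on the non-degenerate case $t > 0$ (if $t = 0$, the defining expectation is $0$, which forces all $v_i \leq 0$ almost surely and makes the intended statement essentially vacuous or trivial after a small adjustment to strict inequalities). Since $\max\{0,v_1,\ldots,v_m\}$ is a non-negative random variable with mean $t/2$, Markov's inequality immediately yields
\[
\Pr\!\left[\max\{0,v_1,\ldots,v_m\} \geq t\right] \;\leq\; \frac{t/2}{t} \;=\; \frac{1}{2}.
\]

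Next I would compute the same probability using independence of the $v_i$'s. Because $t > 0$, the event $\max\{0,v_1,\ldots,v_m\} \geq t$ coincides with the event that at least one $v_i$ is $\geq t$, and so
\[
\Pr\!\left[\max\{0,v_1,\ldots,v_m\} \geq t\right] \;=\; 1 - (1-p)^m.
\]
Combining the two displays gives $(1-p)^m \geq 1/2$, i.e.\ $p \leq 1 - 2^{-1/m} = 1 - e^{-\ln(2)/m}$. Finishing with the elementary inequality $1 - e^{-x} \leq x$ for $x \geq 0$ yields $p \leq \ln(2)/m$, which is the claim.

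There is essentially no obstacle here: the only thing that requires a moment of thought is the trivial case $t \leq 0$ (absorbed by the $\max$ with $0$), and the fact that we need to apply Markov to the non-negative quantity $\max\{0,v_1,\ldots,v_m\}$ rather than directly to $\max\{v_1,\ldots,v_m\}$, since the $v_i$'s themselves need not be non-negative. This lemma is a pure "booster" and its entire content is contained in the two-line computation above.
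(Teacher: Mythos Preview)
The paper does not actually prove Lemma~\ref{lem:max-to-tail}; it simply cites \citep{SteinkeU17subg} for the proof. Your argument is the standard one and is exactly what appears in that reference: apply Markov's inequality to the non-negative variable $\max\{0,v_1,\ldots,v_m\}$ at level $t$, then use independence to rewrite the tail as $1-(1-p)^m$, and finish with $1-2^{-1/m}\le \ln(2)/m$. So for $t>0$ your proof is correct and coincides with the intended one.

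One small point worth tightening rather than waving away: the case $t=0$ is not merely cosmetic. If $\cQ$ is the point mass at $0$, then $t=0$ and $\pr[v\ge t]=1$, so the lemma as literally stated (with a non-strict inequality) fails. Your parenthetical ``after a small adjustment to strict inequalities'' is the right fix: the statement and its use in the proof of eq.~\eqref{eq:our-hp} go through verbatim if one replaces $v\ge t$ by $v>t$ (equivalently, one may assume without loss that $\cQ$ has no atom at $t$, or simply note that in the paper's application $t=0$ already implies the desired conclusion). This is a boundary technicality and does not affect the substance of your argument.
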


The second step relies on the relationship between the maximum of a set of values and the value chosen by the soft-argmax, which we refer to as the {\em stable-max}. Specifically, we define
$$\softmax_\eps\{v_1,\ldots,v_m\} \doteq \sum_{i\in [m]} v_i \cdot \frac{e^{\eps v_i}}{\sum_{\ell \in [m]} e^{\eps v_\ell}} ,$$ where $\frac{e^{\eps v_i}}{\sum_{\ell \in [m]} e^{\eps v_\ell}}$ should be thought of as the relative weight assigned to value $v_i$. (We remark that this vector of weights is commonly referred to as softmax and soft-argmax. We therefore use stable-max to avoid confusion between the weights and the weighted sum of values.)
The first property of the stable-max is that its value is close to the maximum:$$\softmax_\eps\{v_1,\ldots,v_m\}  \geq \max\{v_1,\ldots,v_m\} - \frac{\ln m}{\eps} .$$ The second property that we will use is that the weight (or probability) assigned to each value is stable: it changes by a factor of at most $e^{2\gamma \eps}$ whenever each of the values changes by at most $\gamma$. These two properties are known properties of the exponential mechanism \cite{McSherryTalwar:07}. More formally, the exponential mechanism is the randomized algorithm that given values $\{v_1,\ldots,v_m\}$ and $\eps$, outputs the index $i$ with probability $\frac{e^{\eps v_i}}{\sum_{\ell \in [m]} e^{\eps v_\ell}}$.  We state the properties of the exponential mechanism specialized to our context below.
\begin{thm} \cite{McSherryTalwar:07,BassilyNSSSU16}
\label{thm:exp}
Let  $f_1,\ldots,f_m:Z^n\to \R$ be $m$ scoring functions of a dataset each of sensitivity at most $\Delta$. Let $A$ be the algorithm that given a dataset $\bs\in Z^n$ and a parameter $\eps>0$ outputs an index $\ell \in [m]$ with probability proportional to $e^{\frac{\eps}{2\Delta} \cdot f_\ell(\bs)}$. Then $A$ is $\eps$-differentially private and, further, for every $\bs\in Z^n$:
$$\E_{\ell = A(\bs)}[ f_\ell(\bs)] \geq \max_{\ell \in [m]}\{f_\ell(\bs)\} - \frac{2\Delta}{\eps}  \cdot \ln m .$$
\end{thm}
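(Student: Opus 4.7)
The plan is to establish the two claims separately; both follow from elementary manipulations that exploit the sensitivity bound on each $f_\ell$. Throughout, I write $p_\ell(\bs) \doteq e^{\frac{\eps}{2\Delta} f_\ell(\bs)}/\sum_{j\in [m]} e^{\frac{\eps}{2\Delta} f_j(\bs)}$ for the probability that $A$ outputs $\ell$ on input $\bs$.

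For $\eps$-differential privacy, I fix adjacent datasets $\bs,\bs'\in Z^n$ and an index $\ell\in [m]$ and decompose $p_\ell(\bs)/p_\ell(\bs')$ into the factor $e^{\frac{\eps}{2\Delta}(f_\ell(\bs)-f_\ell(\bs'))}$ coming from the numerator of the softmax weight and the reciprocal factor coming from the two normalizers. Sensitivity immediately controls the first factor by $e^{\eps/2}$. For the second, I bound each summand via $e^{\frac{\eps}{2\Delta}f_j(\bs')}\le e^{\eps/2}\cdot e^{\frac{\eps}{2\Delta}f_j(\bs)}$ by applying sensitivity termwise, which pushes the ratio of the normalizers to at most $e^{\eps/2}$ as well. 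Multiplying yields $p_\ell(\bs)\le e^{\eps}p_\ell(\bs')$, and applying this pointwise gives the event-wise privacy statement. The factor-of-two loss here is exactly why the algorithm uses rate $\eps/(2\Delta)$ rather than $\eps/\Delta$.

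For the utility bound, let $f^* \doteq \max_{\ell\in[m]} f_\ell(\bs)$. The idea is to control the exponential moment of the nonnegative gap $f^*-f_{A(\bs)}(\bs)$ in one shot: a direct calculation gives
$$\E_{\ell=A(\bs)}\!\lb e^{\frac{\eps}{2\Delta}(f^*-f_\ell(\bs))}\rb = \sum_{\ell\in[m]} p_\ell(\bs)\cdot e^{\frac{\eps}{2\Delta}(f^*-f_\ell(\bs))} = \frac{m\cdot e^{\frac{\eps}{2\Delta} f^*}}{\sum_{j\in[m]} e^{\frac{\eps}{2\Delta} f_j(\bs)}} \le m,$$
where the last inequality keeps only the maximizing term in the denominator. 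Since $x\mapsto e^{\frac{\eps}{2\Delta}x}$ is convex and $f^*-f_\ell(\bs)\ge 0$, Jensen's inequality gives $e^{\frac{\eps}{2\Delta}\E[f^*-f_\ell(\bs)]}\le m$, and taking logarithms yields the desired $\E[f_\ell(\bs)]\ge f^* - (2\Delta/\eps)\ln m$.

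Neither step is technically delicate; the only stylistic choice is to use Jensen's inequality (rather than a high-probability tail argument for the exponential mechanism followed by integration over $t$) so that the exponential-moment computation translates directly into the claimed expected-utility bound in a single line.
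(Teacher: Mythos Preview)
The paper does not supply its own proof of this theorem; it is quoted as a known result with citations to \cite{McSherryTalwar:07,BassilyNSSSU16}, preceded only by an informal one-line statement of each property. There is therefore nothing in the paper to compare your argument against beyond checking correctness.

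Your proof is correct and is essentially the standard argument for the exponential mechanism. The privacy argument---splitting the softmax ratio into a numerator factor and a normalizer factor, each bounded by $e^{\eps/2}$ via sensitivity---is exactly the textbook derivation. For utility, your exponential-moment computation $\E\bigl[e^{\frac{\eps}{2\Delta}(f^*-f_\ell)}\bigr]\le m$ followed by Jensen is a clean route to the expected-utility bound; this is a slight variant of the more common approach that first proves the tail bound $\Pr[f^*-f_{A(\bs)}>t]\le m\,e^{-\eps t/(2\Delta)}$ and then integrates, but the two are equivalent and your version is arguably tidier for the expectation statement. One cosmetic point: the nonnegativity of $f^*-f_\ell(\bs)$ is not needed for the Jensen step (convexity alone suffices), so you can drop that clause.
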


We now define the scoring functions designed to select the execution of $M$ with the worst estimation error. For these purposes our dataset will consist of $m$ datasets each of size $n$. To avoid confusion, we emphasize this by referring to it as multi-dataset and using $S$ to denote it. That is $S \in Z^{m\times n}$ and we refer to each of the sub-datasets as $S_1,\ldots,S_m$ and to an element $i$ of sub-dataset $\ell$ as $S_{\ell,i}$.
\begin{lem}
\label{lem:score-for-low-sensitivity}
Let $M:Z^n \times Z \to [0,1]$ be a data-dependent function with uniform stability $\gamma$. For a probability distribution $\cP$ over $Z$, multi-dataset $S\in Z^{m\times n}$ and an index $\ell \in[m]$ we define the scoring function $$f_\ell(S) \doteq \Delta_{S_\ell}(M)=\cE_\cP[M(S_\ell)] - \cE_{S_\ell}[M(S_\ell)] .$$ Then $f_\ell$ has sensitivity $2\gamma + 1/n$.
\end{lem}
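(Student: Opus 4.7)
The plan is a direct case analysis on which sub-dataset contains the changed element, using the definition of uniform stability applied to $M(S_\ell,\cdot)$ and the $[0,1]$ bound on the range of $M$.

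Fix a multi-dataset $S\in Z^{m\times n}$ and consider the neighbor $S'$ obtained by replacing a single coordinate $S_{\ell',i}$ with some $z \in Z$. If $\ell' \neq \ell$, then $S_\ell$ is unchanged, so $f_\ell(S) = f_\ell(S')$ trivially, and there is nothing to prove. So I would assume $\ell' = \ell$ and write $S'_\ell = S_\ell^{i\gets z}$.

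Next I would split $f_\ell(S)-f_\ell(S') = \bigl(\cE_\cP[M(S_\ell)] - \cE_\cP[M(S'_\ell)]\bigr) - \bigl(\cE_{S_\ell}[M(S_\ell)] - \cE_{S'_\ell}[M(S'_\ell)]\bigr)$ and bound the two differences separately. For the population term, uniform stability of $M$ with parameter $\gamma$ gives $|M(S_\ell,z')-M(S'_\ell,z')|\le \gamma$ pointwise for every $z'\in Z$, so taking expectation over $z'\sim \cP$ yields $|\cE_\cP[M(S_\ell)]-\cE_\cP[M(S'_\ell)]|\le \gamma$.

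For the empirical term I would expand
\[
\cE_{S_\ell}[M(S_\ell)] - \cE_{S'_\ell}[M(S'_\ell)] = \frac{1}{n}\sum_{j=1}^n \bigl(M(S_\ell,S_{\ell,j}) - M(S'_\ell,S'_{\ell,j})\bigr),
\]
and split the sum into $j \neq i$ and $j = i$. For $j\neq i$ the evaluation point $S_{\ell,j}=S'_{\ell,j}$ is unchanged, so stability gives a contribution of at most $\gamma$ per term; summing over the $n-1$ such indices contributes at most $\frac{n-1}{n}\gamma \le \gamma$. For $j=i$ both the function and the evaluation point change, but since $M$ takes values in $[0,1]$ this single term contributes at most $1/n$ in absolute value. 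Combining everything with the triangle inequality gives $|f_\ell(S)-f_\ell(S')|\le \gamma + \gamma + 1/n = 2\gamma + 1/n$, which is the claim.

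There is no real obstacle here; the only subtlety worth being careful about is that in the empirical term the coordinate $j=i$ suffers a joint change (both in the function $M(S_\ell,\cdot)$ and in the sample point on which it is evaluated), and bounding it by $\gamma + 1/n$ would be wasteful — using the crude bound $1$ from the range of $M$ and dividing by $n$ is what makes the final constant come out as $2\gamma + 1/n$ rather than $2\gamma + \gamma/n$.
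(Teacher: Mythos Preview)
Your proof is correct and follows essentially the same approach as the paper's own proof: a case split on whether the changed coordinate lies in sub-dataset $\ell$, then bounding the population term by $\gamma$ via stability and the empirical term by $\gamma+1/n$ by separating the index $j=i$ from $j\neq i$. The only cosmetic difference is that for $j=i$ the paper first applies stability to swap $M(S_\ell,\cdot)$ for $M(S'_\ell,\cdot)$ and then uses the range bound on $|M(S'_\ell,S_{\ell,i})-M(S'_\ell,S'_{\ell,i})|$, whereas you bound $|M(S_\ell,S_{\ell,i})-M(S'_\ell,S'_{\ell,i})|\le 1$ directly from the range; both routes give the same $\gamma+1/n$. (Your closing remark about ``$2\gamma+\gamma/n$'' is a bit muddled --- there is no valid way to get $\gamma/n$ for the $j=i$ term since the evaluation point changes --- but this doesn't affect the argument.)
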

\begin{proof}
Let $S$ and $S'$ be two multi-datasets that differ in a single element at index $i$ in sub-dataset $k$. Clearly, if $k\neq \ell$ then $S_\ell = S'_\ell$ and $f_\ell(S) = f_\ell(S')$. Otherwise, $S_\ell$ and $S'_\ell$ differ in a single element. Thus
$$ \left|\cE_\cP[M(S_\ell)] - \cE_\cP[M(S'_\ell)]\right| = \left|\E_{z\sim \cP}[M(S_\ell,z) - M(S'_\ell,z)]\right| \leq \gamma .$$
and
\alequn{&\left|\cE_{S_\ell}[M(S_\ell)] - \cE_{S'_\ell}[M(S'_\ell)]\right|  = \left| \fr{n} \sum_{j\in [n]}M(S_\ell,S_{\ell,j}) - \fr{n} \sum_{j\in [n]}M(S'_\ell,S'_{\ell,j}) \right| \\
&\leq \left| \fr{n} \sum_{j\in [n],j\neq i}\lp M(S_\ell,S_{\ell,j}) - M(S'_\ell,S_{\ell,j}) \rp \right| + \fr{n} \cdot \left|M(S'_\ell, S_{\ell,i}) - M(S'_\ell, S'_{\ell,i})\right| \\
& \leq \gamma+ \frac{1}{n}.}
\end{proof}

The final (and new) ingredient of our proof is a bound on the expected estimation error of any uniformly stable algorithm on a sub-dataset chosen in a differentially private way.
\begin{lem}
\label{lem:multi-sample-generalize}
For $\ell \in [m]$, let $M_\ell:Z^n \times Z \to [0,1]$ be a data-dependent function with uniform stability $\gamma$.
Let $A:Z^{n \times m} \to [m]$ be an $\eps$-differentially private algorithm.
Then for any  distribution $\cP$ over $Z$, we have that:
$$e^{-\eps} V_S - \gamma \leq \E_{S \sim \cP^{mn}, \ell=A(S)}\lb \cE_{\cP}[ M_\ell(S_\ell)] \rb  \leq e^\eps V_S + \gamma ,$$
where $V_S \doteq \E_{S \sim \cP^{mn},\ell=A(S)}\lb \cE_{S_\ell}[ M_\ell(S_\ell)] \rb .$
\iffull
In particular,
$$ \left| \E_{S \sim \cP^{mn}, \ell=A(S)}\lb \cE_{\cP}[ M_\ell(S_\ell)] - \cE_{S_\ell}[ M_\ell(S_\ell)] \rb \right|  \leq e^\eps-1 + \gamma .$$
\fi
\end{lem}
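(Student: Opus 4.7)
The plan is to prove both inequalities by combining uniform stability (to swap a training point for a fresh independent sample from $\cP$) with $\eps$-differential privacy of $A$ (to shift the distribution of the selected index back onto the swapped multi-dataset). The two sides are symmetric, so I will describe the upper bound $\E_{S,\ell=A(S)}[\cE_\cP[M_\ell(S_\ell)]] \le e^\eps V_S + \gamma$ in detail.

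First, I would open up $\cE_\cP[M_\ell(S_\ell)] = \E_{z\sim\cP}[M_\ell(S_\ell,z)]$, drawing $z \sim \cP$ independently of $S$, and also introduce a uniform index $i \in [n]$ independent of everything else (this averaging is purely cosmetic and aligns indices with the empirical mean later). Uniform stability of each $M_\ell$ gives, pointwise,
$$M_\ell(S_\ell,z) \le M_\ell(S_\ell^{i\gets z},z) + \gamma,$$
so at the cost of an additive $\gamma$ the problem reduces to bounding $\E_{S,z,i,\ell=A(S)}[M_\ell(S_\ell^{i\gets z},z)]$.

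Next, I would expand $\E_{\ell=A(S)}$ as $\sum_{\ell\in[m]} \pr[A(S)=\ell]$ and, for each fixed $\ell$, observe that the multi-datasets $S$ and $\tilde S := S^{(\ell,i)\gets z}$ differ in exactly one entry (the $i$-th element of sub-dataset $\ell$). Applying $\eps$-DP pointwise yields $\pr[A(S)=\ell] \le e^\eps \pr[A(\tilde S)=\ell]$, and since $M_\ell \in [0,1]$ is non-negative this inequality survives multiplication and expectation. I would then perform the change of variables $\tilde S \leftarrow S^{(\ell,i)\gets z}$: because $S$ and $z$ are i.i.d.\ from $\cP$, the joint law of $(\tilde S, i)$ equals that of $(S, i)$, and crucially $\tilde S_{\ell,i} = z$, so the integrand becomes $M_\ell(\tilde S_\ell, \tilde S_{\ell,i})$. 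Collapsing $\sum_\ell \pr[A(\tilde S)=\ell](\cdot)$ back into $\E_{\ell=A(\tilde S)}[\cdot]$ and the average over $i$ into the empirical mean $\cE_{\tilde S_\ell}[M_\ell(\tilde S_\ell)]$ produces exactly $V_S$, yielding the claimed bound.

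The lower bound follows by the symmetric argument: apply $M_\ell(S_\ell,z) \ge M_\ell(S_\ell^{i\gets z},z) - \gamma$ together with the reverse DP inequality $\pr[A(S)=\ell] \ge e^{-\eps}\pr[A(\tilde S)=\ell]$, then run the same change of variables. The one point that needs care is the substitution $\tilde S \leftrightarrow S$: it relies on the exchangeability of $S_{\ell,i}$ and $z$ (both i.i.d.\ $\cP$) and on $i$ being independent of $A$'s internal coins, so that the joint distribution is genuinely preserved. Once that is set up cleanly, the remaining manipulations are routine bookkeeping.
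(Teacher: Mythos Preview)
Your proposal is correct and follows essentially the same approach as the paper: both combine uniform stability (to replace a training point by a fresh sample) with $\eps$-differential privacy of $A$ (to transfer the selection probability onto the swapped multi-dataset), and then use the exchangeability of $S_{\ell,i}$ and $z$ under $\cP$ to identify the result with $V_S$. The only cosmetic differences are that the paper starts from $V_S$ and works toward the population quantity (deriving the left inequality first), and applies stability and DP in a single combined step rather than sequentially as you do.
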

\begin{proof}
\alequn{V_S &=\E_{S \sim \cP^{mn},\ell=A(S)}\lb \fr{n} \sum_{i\in [n]} M_\ell(S_\ell, S_{\ell,i}) \rb  \\
            &= \E_{A, S \sim \cP^{mn}}\lb \fr{n} \sum_{i\in [n]} \sum_{\ell\in [m]} \ind(A(S) = \ell) \cdot M_\ell(S_\ell, S_{\ell,i}) \rb \\
            &= \fr{n} \sum_{i\in [n]} \sum_{\ell\in [m]}\E_{S \sim \cP^{mn}}\lb \E_A[\ind(A(S) = \ell)] \cdot M_\ell(S_\ell, S_{\ell,i}) \rb \\
            &\leq \fr{n} \sum_{i\in [n]} \sum_{\ell\in [m]}\E_{S \sim \cP^{mn}, z\sim \cP}\lb e^\eps \cdot \E_A[\ind(A(S^{\ell,i \leftarrow z}) = \ell)] \cdot (M_\ell(S_\ell^{i \leftarrow z}, S_{\ell,i}) + \gamma)\rb \\
            &= \fr{n} \sum_{i\in [n]} \sum_{\ell\in [m]}\E_{S \sim \cP^{mn}, z\sim \cP}\lb  e^\eps \cdot \E_A[\ind(A(S) = \ell)] \cdot (M_\ell(S_\ell, z) + \gamma)\rb \\
            & = \E_{S \sim \cP^{mn}, z\sim \cP, \ell =A(S)}\lb   e^\eps \cdot (M_\ell(S_\ell, z) + \gamma)\rb  = e^\eps \cdot \lp \E_{S \sim \cP^{mn}, z\sim \cP, \ell =A(S)}\lb  M_\ell(S_\ell, z) \rb + \gamma\rp.
            }

This gives the left hand side of the stated inequality. The right hand side is obtained analogously.
\end{proof}

We are now ready to put the ingredients together to prove the claimed result:
\begin{proof}[Proof of eq.~\eqref{eq:our-hp} in Theorem \ref{thm:main}]
  We choose $m = \ln(2)/\delta$. Let $f_1,\ldots,f_m$ be the scoring functions defined in Lemma \ref{lem:score-for-low-sensitivity}. Let $f_{m+1}(S) \equiv 0$. Let $A$ be the execution of the exponential mechanism with $\Delta = 2 \gamma + 1/n$ on scoring functions $f_1,\ldots,f_{m+1}$ and $\eps$ to be defined later. Note that this corresponds to the setting of Lemma \ref{lem:multi-sample-generalize} with $M_\ell \equiv M$ for all $\ell \in [m]$ and $M_{m+1} \equiv 0$.  By Lemma \ref{lem:multi-sample-generalize} we have that
  $$ \E_{S\sim \cP^{(m+1)n}}\lb \E_{\ell = A(S)} [f_\ell(S)] \rb = \E_{S\sim \cP^{(m+1)n},\ell = A(S)} \lb  \cE_{\cP}[ M_\ell(S_\ell)] - \cE_{S_\ell}[M_\ell(S_\ell)] \rb \leq e^\eps - 1 + \gamma . $$
   By Theorem \ref{thm:exp}
  \alequn{ &\E_{S\sim \cP^{mn}}\lb \max\left \{0, \max_{\ell\in[m]}  \cE_\cP[M(S_\ell)] - \cE_{S_\ell}[M(S_\ell)]  \right\} \rb  = \E_{S\sim \cP^{mn}}\lb \max_{\ell \in [0.m]} f_\ell(S)\rb
  \\& \leq \E_{S\sim \cP^{mn}}\lb \E_{\ell = A(S)} [f_\ell(S)] \rb + \frac{2\Delta}{\eps}\ln (m+1)
   \leq e^\eps - 1 + \gamma + \frac{4\gamma + 2/n}{\eps}\ln (m+1)
  .}
  To bound this expression we choose $\eps = \sqrt{\lp 2\gamma + \fr{n}\rp \cdot \ln (m+1)} = \sqrt{\lp 2\gamma + \fr{n}\rp \cdot \ln (e\ln (2)/\delta)} $.
  Our bound is at least $2\eps$ and hence holds trivially if $\eps \geq 1/2$. Otherwise $(e^\eps -1) \leq 2\eps$ and we obtain the following bound on the expectation of the maximum.
  $$4 \sqrt{\lp2\gamma + \frac{1}{n}\rp \cdot \ln (e\ln (2)/\delta)}  +\gamma  \leq 4 \sqrt{\lp2\gamma  + \frac{1}{n}\rp \cdot \ln (8/\delta)}$$
  where we used that $\gamma \leq \sqrt{\gamma}$.
  Finally, plugging this bound into Lemma \ref{lem:max-to-tail} we obtain that
  $$\pr_{S \sim \cP^n} \lb \cE_\cP[M(S)] - \cE_{S}[M(S)]  \geq  8 \sqrt{\lp2\gamma  + \frac{1}{n}\rp \cdot \ln (8/\delta)} \rb \leq \frac{\ln(2)}{m} \leq \delta .$$
\end{proof}
\iffull
\begin{rem}
It is easy to see from the proof that it can be stated in terms of two types of stability of $M$: the uniform stability (denoted by $\gamma$) and the estimation error stability, that is the sensitivity of $\cE_\cP[M(S)] - \cE_{S}[M(S)]$. If we denote the latter by $\beta$ then our results would give a bound of $O(\sqrt{\beta \ln(1/\delta)} + \gamma)$. Lemma \ref{lem:score-for-low-sensitivity} implies that $\beta \leq 2\gamma + 1/n$ but tighter bounds might hold for specific algorithms.
\end{rem}
\fi

\section{Second Moment of the Estimation Error}
\newcommand{\LOO}{\cE_\bs^{\gets \cP} \lb L \rb}
In this section we prove eq.~\eqref{eq:our-var} of Theorem \ref{thm:main}. It will be more convenient to directly work with the unbiased version of $M$.
Specifically, we define $L(\bs,z) \doteq M(\bs,z) - \cE_\cP[M(\bs)]$. Clearly, $L$ is {\em unbiased} with respect to $\cP$ in the sense that for every $\bs\in Z^n$, $\cE_\cP[L(\bs)] = 0$. Note that if the range of $M$ is $[0,1]$ then the range of $L$ is $[-1,1]$. Further, $L$ has uniform stability of at most $2\gamma$ since for two datasets $\bs$ and $\bs'$ that differ in a single element, $$ \left|\cE_\cP[M(\bs)] - \cE_\cP[M(\bs')]\right| \leq
\left|\E_{z\sim \cP}[M(\bs,z) - M(\bs',z)]\right| \leq \gamma .$$
Observe that \equ{\Delta_{\bs}(M) = \fr{n} \sum_{i=1}^n \lp \cE_\cP[M(\bs)] - M(\bs,s_i) \rp = \frac{-1}{n} \sum_{i=1}^n L(\bs,s_i) =
-\cE_\bs[L(\bs)] .\label{eq:unbias}}
By eq.~\eqref{eq:unbias} we obtain that
$$ \E_{\bs \sim \cP^n} \lb \lp \Delta_{\bs}(M) \rp^2\rb =  \E_{\bs \sim \cP^n} \lb \lp \cE_\bs[L(\bs)] \rp^2 \rb .$$
Therefore eq.~\eqref{eq:our-var} of Theorem \ref{thm:main} will follow immediately from the following lemma (by using it with stability $2\gamma$).
\begin{lem}
\label{lem:unbiased-moment}
Let $L\colon Z^n \times Z \to [-1,1]$ be a data-dependent function with uniform stability $\gamma$ and $\cP$ be an arbitrary distribution over $Z$.
If $L$ is unbiased with respect to $\cP$ then:
$$\E_{\bs \sim \cP^n} \lb \lp \cE_\bs[L(\bs)]\rp^2 \rb \leq 4\gamma^2 + \frac{2}{n}.$$
\end{lem}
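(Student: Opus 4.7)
The plan is to reduce $\E[(\cE_\bs[L(\bs)])^2]$ to the second moment of a leave-one-out estimator, then control that via a double-decoupling argument that resamples two coordinates at a time.

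Define $T(\bs) \doteq \fr{n}\sum_{i=1}^n Y_i$ with $Y_i \doteq \E_{z\sim\cP}[L(\bs^{i\gets z},s_i)]$. Uniform $\gamma$-stability of $L$ gives $|L(\bs,s_i) - L(\bs^{i\gets z},s_i)| \leq \gamma$ termwise and hence $|\cE_\bs[L(\bs)] - T(\bs)| \leq \gamma$; combined with $(a+b)^2 \leq 2a^2+2b^2$ this yields $\E[(\cE_\bs[L(\bs)])^2] \leq 2\,\E[T(\bs)^2] + 2\gamma^2$, so it suffices to show $\E[T(\bs)^2] \leq \gamma^2 + 1/n$. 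Expanding $\E[T(\bs)^2] = \fr{n^2}\sum_{i,j}\E[Y_iY_j]$, the $n$ diagonal terms contribute at most $1/n$ since $|Y_i|\leq 1$. For a fixed off-diagonal pair $i\neq j$ I would introduce fresh independent samples $s_i', s_j' \sim \cP$ and set $\tilde Y_i \doteq \E_z[L(\bs^{i\gets z,\,j\gets s_j'},s_i)]$ with $\tilde Y_j$ defined symmetrically. Stability gives $|Y_i-\tilde Y_i|, |Y_j-\tilde Y_j| \leq \gamma$, so in the expansion
\[ Y_i Y_j = \tilde Y_i\tilde Y_j + \tilde Y_i(Y_j-\tilde Y_j) + (Y_i-\tilde Y_i)\tilde Y_j + (Y_i-\tilde Y_i)(Y_j-\tilde Y_j) \]
the last summand is deterministically at most $\gamma^2$, and the claim would follow by showing that each of the other three terms has expectation exactly zero.

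The engine is unbiasedness: whenever the first argument of $L$ is built from randomness independent of the element in the second slot, averaging over that element gives $0$. By design $\tilde Y_i$ uses $s_j'$ in its first slot rather than $s_j$, and $s_i$ sits only in the second slot of $L$. For $\E[\tilde Y_i(Y_j-\tilde Y_j)]$ I would condition on $\bs_{-\{i,j\}}$ and $s_i$: then $\tilde Y_i$ is a function of the independent randomness $(z,s_j')$ while $Y_j-\tilde Y_j$ is a function of the independent randomness $(z',s_j,s_i')$, so the expectation factorizes; both $\E[Y_j\mid\bs_{-\{i,j\}},s_i]$ and $\E[\tilde Y_j\mid\bs_{-\{i,j\}},s_i]$ vanish by applying unbiasedness to the $s_j$ slot (whose first-argument copy is decoupled from it). The mirror conditioning on $\bs_{-\{i,j\}}$ and $s_j$ handles $(Y_i-\tilde Y_i)\tilde Y_j$, and conditioning on $\bs_{-\{i,j\}}$ alone makes $\tilde Y_i$ and $\tilde Y_j$ conditionally independent and each mean-zero, killing $\tilde Y_i\tilde Y_j$. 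Hence $|\E[Y_iY_j]|\leq\gamma^2$ off-diagonal, giving $\E[T(\bs)^2]\leq 1/n+\gamma^2$ and the stated bound.

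The main obstacle is the bookkeeping of independence after conditioning: for each of the three mixed cross terms one must choose a conditioning set that renders one factor a deterministic function of independent fresh randomness while leaving the other factor with conditional mean exactly zero by unbiasedness. This careful choice is what upgrades the naive single-swap bound of $\gamma$ on each off-diagonal term to $\gamma^2$, delivering the quadratic improvement in the final variance bound.
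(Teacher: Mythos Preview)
Your argument is correct and shares the paper's overall structure: your $T(\bs)$ is exactly the paper's leave-one-out estimator $\cE_\bs^{\gets\cP}[L]$, the pointwise bound $|\cE_\bs[L(\bs)]-T(\bs)|\leq\gamma$ and the reduction via $(a+b)^2\leq 2a^2+2b^2$ to $\E[T^2]\leq\gamma^2+1/n$ are identical, and the diagonal/off-diagonal split of $\E[T^2]$ is the same. The one genuine difference is the decoupling of the cross terms. The paper, after fixing $\bs^{\setminus\{i,j\}}$, introduces a deterministic centering function $g(z)\doteq\min_{z_i,z_j}L(\bs^{i,j\gets z_i,z_j},z)+\gamma$ with $|L-g|\leq\gamma$ uniformly, and expands the product around $g$; the mixed terms vanish because $g$ does not depend on $s_i,s_j$ and $L$ is unbiased. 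You instead resample the ``other'' coordinate with fresh independent copies $s_i',s_j'$ to form $\tilde Y_i,\tilde Y_j$ and expand $Y_iY_j$ around $\tilde Y_i\tilde Y_j$; your mixed terms vanish by conditional independence plus unbiasedness in the evaluated coordinate. Both devices yield $\E[Y_iY_j]\leq\gamma^2$. Your route has the minor advantage of avoiding the Jensen step the paper uses to move the $z$-expectation outside the square before expanding.
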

Our proof starts by first establishing this result for the leave-one-out estimate.
\begin{lem}
\label{lem:bound-loo}
For a data-dependent function $L\colon Z^n \times Z \to [-1,1]$, a dataset $\bs\in Z^n$ and a distribution $\cP$, define
$$\LOO \doteq \E_{z\sim \cP}\lb \fr{n} \sum_{i\in [n]} L(\bs^{i \gets z},s_i) \rb.$$
If $L$ has uniform stability $\gamma$ and is unbiased with respect to $\cP$ then:
$$\E_{\bs \sim \cP^n} \lb \lp\LOO\rp^2 \rb \leq \gamma^2 + \frac{1}{n}.$$
\end{lem}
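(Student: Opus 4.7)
The plan is to first rewrite $\LOO$ with independent resampling at each coordinate. Since in $\E_{z\sim\cP}\lb \fr{n}\sum_i L(\bs^{i\gets z}, s_i)\rb$ the variable $z$ appears in only one summand at a time, by linearity one can equivalently use independent copies $z_1,\dots,z_n \sim \cP$:
\[
\LOO = \E_{\vec z}\lb \tilde L(\bs,\vec z)\rb, \quad \tilde L(\bs,\vec z) \doteq \fr{n}\sum_{i=1}^n W_i, \quad W_i \doteq L(\bs^{i\gets z_i}, s_i).
\]
Jensen's inequality then gives $\E_\bs[(\LOO)^2] \leq \E_{\bs,\vec z}[\tilde L^2] = \fr{n^2}\sum_{i,j}\E[W_i W_j]$, reducing the task to bounding each $\E[W_i W_j]$. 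The diagonal terms are immediate: $\E[W_i^2] \leq 1$ contributes at most $1/n$ in total.

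For the off-diagonal terms ($i\neq j$) I would use a decoupling argument. Introduce fresh independent samples $\tilde s_i, \tilde s_j \sim \cP$ and define
\[
W_i' \doteq L((\bs^{j\gets \tilde s_j})^{i\gets z_i}, s_i), \qquad W_j'' \doteq L((\bs^{i\gets \tilde s_i})^{j\gets z_j}, s_j).
\]
Uniform stability of $L$ gives $|W_i - W_i'| \leq \gamma$ and $|W_j - W_j''|\leq \gamma$. The main identity I want is
\[
\E[W_i W_j] = \E\lb (W_i - W_i')(W_j - W_j'') \rb,
\]
from which $|\E[W_i W_j]| \leq \gamma^2$ follows immediately, and putting everything together yields $\E[\tilde L^2] \leq \fr{n}\lp 1 + (n-1)\gamma^2\rp \leq \fr{n} + \gamma^2$.

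The identity reduces, after expansion, to verifying that the three cross-terms $\E[W_i' W_j]$, $\E[W_i W_j'']$, and $\E[W_i' W_j'']$ all vanish. Each is handled the same way: condition on every random variable except a single $s_i$ or $s_j$ and observe that the ``primed'' factor does not depend on that variable (by construction of the resampling), while the inner conditional expectation of the remaining factor has the form $\E_{s\sim \cP}[L(D,s)] = \cE_\cP[L(D)] = 0$ for a dataset $D$ that is fixed by the conditioning. This is exactly the unbiasedness hypothesis, applied to the leave-one-out dataset $\bs^{i\gets z_i}$, $\bs^{j\gets z_j}$, or $(\bs^{j\gets \tilde s_j})^{i\gets z_i}$ as appropriate.

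The main obstacle is choosing the right decoupling. A naive approach that resamples only one coordinate in one factor gives $|\E[W_i W_j]| = O(\gamma)$, which would yield only the weaker bound $\E[(\LOO)^2] \lesssim \gamma + 1/n$ -- essentially the same order as the first-moment bound of \citet{ShwartzSSS10}. Resampling \emph{both} $s_i$ and $s_j$ simultaneously to produce centered surrogates $W_i', W_j''$, and exploiting unbiasedness to annihilate three of the four resulting cross-terms, is what promotes the bound from first to second order in $\gamma$.
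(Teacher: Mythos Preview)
Your proof is correct and follows the same overall structure as the paper's: apply Jensen to pull the square inside the expectation over the resampled coordinate(s), split into diagonal terms (contributing at most $1/n$) and off-diagonal terms, and bound each off-diagonal term by $\gamma^2$ via a stability-based decoupling that lets unbiasedness annihilate the cross terms. The only technical difference is the centering device. The paper fixes $\bs^{\setminus\{i,j\}}$ and introduces a deterministic auxiliary $g(z)=\min_{z_i,z_j} L(\bs^{i,j\gets z_i,z_j},z)+\gamma$, which is within $\gamma$ of $L(\bs^{i\gets z},\cdot)$ and of $L(\bs^{j\gets z},\cdot)$ yet independent of $s_i,s_j$; it then expands $L\cdot L=(L-g)(L-g)+gL+Lg-g\cdot g$. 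You instead resample the offending coordinates with fresh independent copies $\tilde s_i,\tilde s_j$ to form $W_i',W_j''$. Your variant is arguably a touch cleaner---it avoids the $\min/\max$ construction, uses independent $z_i$'s from the start, and yields the exact identity $\E[W_iW_j]=\E[(W_i-W_i')(W_j-W_j'')]$ rather than an inequality with a discarded $-(\E[g])^2$ term---but the two arguments are the same idea at the level that matters.
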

\begin{proof}
\alequ{&\E_{\bs \sim \cP^n} \lb \lp\LOO\rp^2 \rb \leq \E_{\bs \sim \cP^n,z\sim \cP} \lb \lp \fr{n} \sum_{i\in [n]} L(\bs^{i \gets z},s_i) \rp^2 \rb \nonumber\\
&  = \fr{n^2} \sum_{i\in [n]} \E_{\bs \sim \cP^n,z\sim \cP} \lb \lp L(\bs^{i \gets z},s_i) \rp^2 \rb + \fr{n^2} \sum_{i,j\in [n],i\neq j} \E_{\bs \sim \cP^n,z\sim \cP} \lb L(\bs^{i \gets z},s_i) \cdot L(\bs^{j \gets z},s_j) \rb \nonumber\\
&\leq \fr{n} + \fr{n^2} \sum_{i,j\in [n],i\neq j} \E_{\bs \sim \cP^n,z\sim \cP} \lb L(\bs^{i \gets z},s_i) \cdot L(\bs^{j \gets z},s_j) \rb,\label{eq:bound-loo}
}
where we used convexity to obtain the first line and the bound on the range of $L$ to obtain the last inequality.
For a fixed $i\neq j$ and a fixed setting of all the elements in $\bs$ with other indices (which we denote by $\bs^{\setminus\{i,j\}}$) we now analyze the cross term $$v_{i,j} \doteq \E_{s_i,s_j,z \sim \cP} \lb L(\bs^{i \gets z},s_i) \cdot L(\bs^{j \gets z},s_j) \rb .$$
For $z\in Z$, define $$g(z) = \min_{z_i,z_j\in Z} L(\bs^{i,j \gets z_i,z_j},z) + \gamma.$$
(We remark that $g$ implicitly depends on $i,j$ and $\bs^{\setminus\{i,j\}}$).
Uniform stability of $L$ implies that $$\max_{z_i,z_j\in Z} L(\bs^{i,j \gets z_i,z_j},z) \leq \min_{z_i,z_j\in Z} L(\bs^{i,j \gets z_i,z_j},z) + 2 \gamma .$$
This means that for all $z_i,z_j,z\in Z$, \equ{ \left| L(\bs^{i,j \gets z_i,z_j},z) - g(z) \right| \leq \gamma .}
Using this inequality we obtain
\alequn{ v_{i,j} &= \E_{s_i,s_j,z \sim \cP} \lb L(\bs^{i \gets z},s_i) \cdot L(\bs^{j \gets z},s_j) \rb \\
&= \E_{s_i,s_j,z \sim \cP} \lb ( L(\bs^{i \gets z},s_i) - g(s_i)) \cdot (L(\bs^{j \gets z},s_j) - g(s_j))\rb +\E_{s_i,s_j,z \sim \cP} \lb g(s_i) \cdot L(\bs^{j \gets z},s_j)\rb \\ & \mbox{\ \ \ \  \ \ \ \ \ }+ \E_{s_i,s_j,z \sim \cP} \lb g(s_j) \cdot L(\bs^{i \gets z},s_i)\rb - \E_{s_i,s_j \sim \cP} \lb g(s_i) \cdot g(s_j)\rb \\
& \leq \gamma^2 + \E_{s_i,s_j,z \sim \cP} \lb g(s_i) \cdot L(\bs^{j \gets z},s_j)\rb + \E_{s_i,s_j,z \sim \cP} \lb g(s_j) \cdot L(\bs^{i \gets z},s_i)\rb - \left(\E_{z'\sim \cP} [g(z')] \right)^2 .
}
Note that $L$ is unbiased and $g$ does not depend on $s_i$ or $s_j$. Therefore, for every fixed setting of $s_i$ and $z$, $$\E_{s_j \sim \cP} \lb g(s_i) \cdot L(\bs^{j \gets z},s_j)\rb = g(s_i) \cdot \cE_\cP[L(\bs^{j \gets z})] = 0.$$ Therefore, $$\E_{s_i,s_j,z \sim \cP} \lb g(s_i) \cdot L(\bs^{j \gets z},s_j)\rb + \E_{s_i,s_j,z \sim \cP} \lb g(s_j) \cdot L(\bs^{i \gets z},s_i)]\rb = 0 .$$ implying that $v_{i,j} \leq \gamma^2$. Substituting this into eq.\eqref{eq:bound-loo} we obtain the claim.
\end{proof}
We can now obtain the proof of Lemma \ref{lem:unbiased-moment} by observing that for every $\bs$, the empirical mean $\cE_\bs[L(\bs)]$ is within $\gamma$ of our leave-one-out estimator $\LOO$\iftrue .
\begin{proof}[Proof of Lemma \ref{lem:unbiased-moment}]
Observe that the uniform stability of $L$ implies that for every $\bs$,
\alequ{\left| \cE_\bs[L(\bs)] - \LOO \right| &= \left| \fr{n}\sum_{i\in [n]} L(\bs,s_i)  - \E_{z\sim \cP} \lb \fr{n} \sum_{i\in [n]} L(\bs^{i \gets z},s_i) \rb \right| \nonumber\\
&\leq \fr{n}\sum_{i\in [n]} \E_{z\sim \cP} \lb  \left| L(\bs,s_i)  - L(\bs^{i \gets z},s_i)  \right|\rb \leq \gamma \label{eq:pointwise}.}
Hence
\alequn{ \E_{\bs \sim \cP^n} \lb \lp \cE_\bs[L(\bs)]\rp^2 \rb &= \E_{\bs \sim \cP^n} \lb \lp  \LOO + \cE_\bs[L(\bs)] -  \LOO \rp^2 \rb\\
& \leq 2 \cdot \E_{\bs \sim \cP^n} \lb \lp  \LOO \rp^2\rb + 2 \cdot  \E_{\bs \sim \cP^n} \lb \lp \cE_\bs[L(\bs)] -  \LOO \rp^2 \rb \\
& \leq 2 \lp \gamma^2 +\fr{n}\rp + 2 \gamma^2 = 4\gamma^2 + \frac{2}{n} .
}
where we used the Cauchy-Schwarz to obtain the second line and Lemma \ref{lem:bound-loo} together with eq.~\eqref{eq:pointwise} to obtain the third line.
\end{proof}
\section{Applications}
\label{sec:apps}
We now apply our bounds on the estimation error to several known uniformly stable algorithms. Many additional applications can be derived in a similar manner. Stronger high-probability bounds for these applications are implied by improved bounds in our subsequent work \citep{FeldmanV:19}.
\subsection{Learning via Stochastic Convex Optimization}
We consider learning problems that can be formulated as stochastic convex optimization. Specifically, these are problems in which the goal is to minimize the expected loss:
$$F_\cP(w) \doteq \E_{z\sim \cP}[\ell(w,z)],$$ over $w \in \K \subset \R^d$ for some convex body $\K$ and a family of convex losses $\F = \{\ell(\cdot,z)\}_{z\in Z}$. The stochastic convex optimization problem for $\F$  is the problem of minimizing $F_\cP(w)$ over $\K$ for an arbitrary distributions $\cP$ over $Z$. We also denote the minimum of $F_\cP(w)$ by $F^\ast \doteq \min_{w \in \K} F_\cP(w)$ and use $F_\bs(w)$ to denote the empirical objective function $\fr{n} \sum_{i=1}^n \ell(w,s_i)$.

Many learning problems can be expressed in or relaxed to this general form. As a result many optimization algorithms are known and the optimal error rates are understood for a variety of families of convex functions. However most of these results are obtained via algorithm-specific techniques such as online-to-batch conversion \citep{Cesa-BianchiCG04} and stability-based arguments rather than uniform convergence. As it turns out, this is unavoidable. This was first pointed out in the seminal work of \citet{ShwartzSSS10} who showed that there is exists a gap between the bounds that can be obtained via uniform convergence (or ERM algorithms) and bounds achievable via alternative approaches.

 For concreteness, let $\F$ be the family of all convex $1$-Lipschitz losses over the unit Euclidean ball in $d$ dimension (denoted by $\B_2^d(1)$). It is well-known that in this case the stochastic convex optimization problem can be solved with error $1/\sqrt{n}$ via projected SGD. At the same time it was shown in \citep{ShwartzSSS10} that there exists an algorithm that minimizes the empirical error while having the worst case error of $\Omega\lp \frac{\log d}{n}\rp$. This has been subsequently strengthened to $\Omega\lp \frac{d}{n}\rp$ by \citet{Feldman:16erm} who also showed a lower bound of $\Omega\lp \sqrt{\frac{d}{n}}\rp$ for obtaining uniform convergence in this setting. Further, with Lipschitzness assumption replaced by the assumption that functions have range in $[0,1]$ the gap becomes infinite even for $d=2$ \citep{Feldman:16erm}.

\paragraph{Strongly convex ERM}
We now revisit the stability results known for this basic setting \citep{BousquettE02,ShwartzSSS10} (for simplicity and without loss of generality we will scale the domain and functions to 1).
\begin{thm}[\citep{ShwartzSSS10}]
\label{thm:ssss}
Let $\K \subseteq \B_2^d(1)$ be a convex body, $\F = \{\ell(\cdot, z) \cond z\in Z\}$ be a family of $1$-Lipschitz, $\lambda$-strongly convex loss functions over $\K$ with range in $[0,1]$. For a dataset $\bs \in Z^n$ let $w_\bs \doteq \argmin_{w \in \K} F_\bs(w)$. Then the algorithm that given $\bs$ outputs $w_\bs$ has uniform stability $\frac{4}{\lambda n}$ with respect to loss $\ell$. Further, for every distribution $\cP$ over $Z$ and $\delta >0$:
$$\pr_{\bs\sim \cP^n} \lb F_\cP(w_\bs) \geq F^\ast +  \frac{4}{\delta \lambda n} \rb  \leq \delta .$$
\end{thm}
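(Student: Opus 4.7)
The plan is to prove the theorem in two independent steps. First I would establish uniform stability $O(1/(\lambda n))$ via the classical strong-convexity-of-the-empirical-objective argument. Second, I would derive the high-probability bound by combining stability with the pointwise optimality of $w_\bs$ on $F_\bs$ and then applying Markov's inequality to the non-negative random variable $F_\cP(w_\bs) - F^\ast$.

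For stability, fix $\bs,\bs'\in Z^n$ differing only at index $i$ and set $\Delta = w_{\bs'} - w_\bs$. Since $w_\bs$ minimizes the $\lambda$-strongly convex function $F_\bs$ on $\K$, we have $F_\bs(w_{\bs'}) - F_\bs(w_\bs) \ge \frac{\lambda}{2}\|\Delta\|^2$; symmetrically, $F_{\bs'}(w_\bs) - F_{\bs'}(w_{\bs'}) \ge \frac{\lambda}{2}\|\Delta\|^2$. Adding the two inequalities, and using that $F_\bs - F_{\bs'} = \fr{n}[\ell(\cdot,s_i) - \ell(\cdot,s'_i)]$, the left-hand side collapses to
$$\fr{n}\bigl[\ell(w_{\bs'},s_i) - \ell(w_\bs,s_i)\bigr] + \fr{n}\bigl[\ell(w_\bs,s'_i) - \ell(w_{\bs'},s'_i)\bigr],$$
which is at most $\frac{2}{n}\|\Delta\|$ by $1$-Lipschitzness. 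Thus $\lambda\|\Delta\|^2 \le \frac{2}{n}\|\Delta\|$, giving $\|\Delta\| \le \frac{2}{\lambda n}$, and another application of Lipschitzness yields $|\ell(w_\bs,z) - \ell(w_{\bs'},z)| \le \|\Delta\| \le \frac{2}{\lambda n}$ for every $z \in Z$. This matches the claimed stability $\frac{4}{\lambda n}$ up to a small constant.

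For the generalization bound, view $M(\bs,z) \doteq \ell(w_\bs, z)$ as a $\gamma$-uniformly stable data-dependent function with $\gamma = \frac{4}{\lambda n}$. Eq.~\eqref{eq:exp} then gives $\E_{\bs\sim \cP^n}[F_\cP(w_\bs) - F_\bs(w_\bs)] \le \gamma$. Pick $w^\ast \in \arg\min_{w\in\K} F_\cP(w)$; since $w^\ast \in \K$, empirical optimality implies $F_\bs(w_\bs) \le F_\bs(w^\ast)$ pointwise in $\bs$. Taking expectations over an i.i.d.\ sample, $\E_{\bs\sim \cP^n}[F_\bs(w_\bs)] \le \E_{\bs\sim \cP^n}[F_\bs(w^\ast)] = F_\cP(w^\ast) = F^\ast$. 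Chaining the two bounds gives $\E_{\bs\sim \cP^n}[F_\cP(w_\bs) - F^\ast] \le \frac{4}{\lambda n}$.

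The final step is Markov's inequality, which applies because $F_\cP(w_\bs) - F^\ast \ge 0$ deterministically: indeed $w_\bs \in \K$, so $F_\cP(w_\bs) \ge F^\ast$. Hence for every $t>0$,
$$\pr_{\bs \sim \cP^n}\bigl[F_\cP(w_\bs) - F^\ast \ge t\bigr] \le \frac{4}{t\lambda n},$$
and setting $t = \frac{4}{\delta \lambda n}$ yields the theorem. There is no serious technical obstacle here; the only mild care required is the constant accounting in the stability step and the observation that non-negativity of the excess loss is precisely what permits a one-sided tail bound from a first-moment estimate (without which, as discussed after eq.~\eqref{eq:exp}, Markov's inequality could not be invoked).
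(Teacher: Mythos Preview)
Your proof is correct. The paper does not spell out a proof of this theorem (it is quoted from \citep{ShwartzSSS10}); it only remarks that ``the bound on estimation error is obtained by applying Markov's inequality to eq.~\eqref{eq:first-moment}.'' Your argument follows the same overall shape---uniform stability of the regularized/strongly-convex ERM, a first-moment bound on the excess risk, and Markov's inequality---so the approaches are essentially the same.

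There is one minor but worthwhile distinction. The paper's note points to eq.~\eqref{eq:first-moment}, i.e.\ the bound $\E[|\Delta_{\bs}(\ell(A))|]\le O(\gamma+1/\sqrt{n})$ on the \emph{estimation error}; applying Markov to this literally would produce an additional $O(1/(\delta\sqrt{n}))$ term and would still require a separate step to pass from $\Delta_{\bs}$ to the excess risk $F_\cP(w_\bs)-F^\ast$. You instead bound $\E[F_\cP(w_\bs)-F^\ast]$ directly, combining eq.~\eqref{eq:exp} with the ERM inequality $F_\bs(w_\bs)\le F_\bs(w^\ast)$, and then invoke Markov on the nonnegative quantity $F_\cP(w_\bs)-F^\ast$. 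This is cleaner, recovers the stated constant $4/(\delta\lambda n)$ exactly, and makes explicit the point the paper raises after eq.~\eqref{eq:exp}: Markov is available here precisely because $w_\bs\in\K$ forces the excess risk to be nonnegative. Your stability calculation even yields $2/(\lambda n)$ rather than $4/(\lambda n)$, which is consistent with the statement.
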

We note that the bound on estimation error is obtained by applying Markov's inequality to eq.~\eqref{eq:first-moment}. Theorem ~\ref{thm:ssss} requires strong convexity. As pointed out in \citep{ShwartzSSS10}, it is possible to add a strongly convex regularizing term $\frac{\lambda}{2} \|w\|^2$ to the objective function that has sufficiently small effect on the loss function while ensuring stability (and generalization). Specifically, by setting $\lambda = \frac{4}{\sqrt{\delta n}}$ the objective function will change by at most $\lambda$ since $w$ is assumed to be in a ball of radius $1$. Plugging this value of $\lambda$ into Thm.~\ref{thm:ssss} and accounting for the additional error they get:
\begin{cor}[\citep{ShwartzSSS10}]
\label{cor:ssss-general}
Let $\K \subseteq \B_2^d(1)$ be a convex body, $\F = \{\ell(\cdot, z) \cond z\in Z\}$ be a family of convex $1$-Lipschitz loss functions over $\K$ with range in $[0,1]$. For a dataset $\bs \in Z^n$ let $w_{\bs,\lambda} =\argmin_{w \in \K} F_\bs(w) + \frac{\lambda }{2} \|w\|_2^2$. For every distribution $\cP$ over $Z$ and $\delta >0$ using $\lambda = \frac{4}{\sqrt{\delta n}}$ gives:
$$\pr_{\bs\sim \cP^n} \lb F_\cP(w_{\bs,\lambda}) \geq \min_{w \in \K} F_\cP(w) +  \frac{4}{\sqrt{\delta n}}\cdot \lp 1 + \frac{8}{\delta n}\rp  \rb  \leq \delta .$$
\end{cor}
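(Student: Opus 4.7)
The plan is to invoke Theorem~\ref{thm:ssss} for the $\lambda$-strongly convex regularized empirical objective $F_\bs^\lambda(w) \doteq F_\bs(w) + \tfrac{\lambda}{2}\|w\|_2^2$, and then pass back to the unregularized objective $F_\cP$ at the cost of an additive $\lambda/2$ bias introduced by the regularizer. Since $w_{\bs,\lambda} = \argmin_{w \in \K} F_\bs^\lambda(w)$ is exactly the ERM of a $\lambda$-strongly convex objective, and the regularizer is data-independent (so it cancels in $F_\bs^\lambda - F_{\bs'}^\lambda$), the sensitivity calculation underlying Theorem~\ref{thm:ssss} applies to $w_{\bs,\lambda}$ with $\ell$ itself (not $\ell + \tfrac{\lambda}{2}\|\cdot\|_2^2$) as the Lipschitz loss. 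Hence $\bs \mapsto w_{\bs,\lambda}$ has uniform stability $\tfrac{4}{\lambda n}$ with respect to $\ell$.

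For the generalization step, observe that $F_\cP^\lambda - F_\bs^\lambda = F_\cP - F_\bs$ (the regularizer cancels), so the expected-gap bound in eq.~\eqref{eq:exp} applied with loss $\ell$ gives $\E_\bs[F_\cP^\lambda(w_{\bs,\lambda}) - F_\bs^\lambda(w_{\bs,\lambda})] \leq \tfrac{4}{\lambda n}$. Letting $w^\ast_\lambda \in \argmin_{w \in \K} F_\cP^\lambda(w)$ and using the ERM property $\E_\bs[F_\bs^\lambda(w_{\bs,\lambda})] \leq \E_\bs[F_\bs^\lambda(w^\ast_\lambda)] = F_\cP^\lambda(w^\ast_\lambda)$, I obtain
\[
\E_{\bs \sim \cP^n}\left[F_\cP^\lambda(w_{\bs,\lambda}) - \min_{w \in \K} F_\cP^\lambda(w)\right] \leq \frac{4}{\lambda n}.
\]
Since the excess regularized risk is a nonnegative random variable, Markov's inequality upgrades this to a high-probability statement $F_\cP^\lambda(w_{\bs,\lambda}) - \min_{w \in \K} F_\cP^\lambda(w) \leq \tfrac{4}{\delta \lambda n}$ with probability at least $1-\delta$.

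Finally, I would translate back to $F_\cP$ using $F_\cP(w_{\bs,\lambda}) \leq F_\cP^\lambda(w_{\bs,\lambda})$ and $\min_{w \in \K} F_\cP^\lambda(w) \leq F_\cP^\lambda(w^\ast) \leq F^\ast + \tfrac{\lambda}{2}$, where $w^\ast \in \argmin_{w \in \K} F_\cP$ satisfies $\|w^\ast\|_2 \leq 1$. This yields $F_\cP(w_{\bs,\lambda}) - F^\ast \leq \tfrac{4}{\delta \lambda n} + \tfrac{\lambda}{2}$ with probability at least $1-\delta$. Plugging in $\lambda = 4/\sqrt{\delta n}$ balances the two terms and, after elementary algebra, produces a bound of order $1/\sqrt{\delta n}$ that is comfortably dominated by the stated $\tfrac{4}{\sqrt{\delta n}}(1 + \tfrac{8}{\delta n})$.

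I do not anticipate a deep obstacle. The one step requiring care is the observation that the stability constant of $\tfrac{4}{\lambda n}$ from Theorem~\ref{thm:ssss} carries over to the regularized problem without modification: because the regularizer is data-independent, it affects neither the sensitivity of $F_\bs^\lambda$ to replacing a single sample nor the generalization gap itself, and enters only as the $\lambda/2$ additive bias when translating between $F_\cP^\lambda$ and $F_\cP$.
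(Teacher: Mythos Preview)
Your proposal is correct and follows essentially the same route that the paper sketches in the paragraph preceding the corollary: add the strongly convex regularizer, invoke the stability/generalization guarantee of Theorem~\ref{thm:ssss} for the regularized ERM, translate back to the unregularized objective at the cost of the $\lambda/2$ bias, and choose $\lambda = 4/\sqrt{\delta n}$. The paper gives only a one-line justification (``plugging this value of $\lambda$ into Thm.~\ref{thm:ssss} and accounting for the additional error''), so your write-up is in fact more explicit. The one small difference is where Markov's inequality is applied: the paper obtains Theorem~\ref{thm:ssss} by applying Markov to the first-moment bound eq.~\eqref{eq:first-moment} on $\E[|\Delta_{\bs}|]$, whereas you apply Markov directly to the nonnegative excess \emph{regularized} risk $F_\cP^\lambda(w_{\bs,\lambda}) - \min_w F_\cP^\lambda(w)$; both are valid and yield the same order of bound.
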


We now spell out the results for these settings implied by our generalization bounds.
\begin{cor}
\label{cor:strongly-convex}
In the setting of Theorem \ref{thm:ssss}, for some fixed constants $c_1$ and $c_2$:
$$\pr_{\bs\sim \cP^n} \lb F_\cP(w_\bs) \geq F^\ast + c_1 \lp \frac{1}{\sqrt{\delta} \lambda n} + \frac{1}{\sqrt{n}}\rp \rb \leq \delta ,$$ and
$$\pr_{\bs\sim \cP^n} \lb F_\cP(w_\bs) \geq F^\ast + \frac{c_2 \sqrt{\ln (1/\delta)}}{\sqrt{ \lambda n}}\rb \leq \delta .$$
\end{cor}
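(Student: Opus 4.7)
The natural move is to instantiate Theorem~\ref{thm:main} on the data-dependent function $M(\bs,z) \doteq \ell(w_\bs, z)$. By Theorem~\ref{thm:ssss} this $M$ is uniformly stable with parameter $\gamma = 4/(\lambda n)$, and $\Delta_\bs(M) = F_\cP(w_\bs) - F_\bs(w_\bs)$ is exactly the estimation error that eq.~\eqref{eq:our-var} and eq.~\eqref{eq:our-hp} control. I would then decouple the generalization error from the empirical optimization error by the decomposition
\[
F_\cP(w_\bs) - F^\ast \;=\; \Delta_\bs(M) \;+\; \bigl(F_\bs(w_\bs) - F^\ast\bigr) \;\le\; \Delta_\bs(M) \;+\; \bigl(F_\bs(w^\ast) - F^\ast\bigr),
\]
where $w^\ast \in \arg\min_{w \in \K} F_\cP(w)$ and the inequality uses that $w_\bs$ is the empirical minimizer. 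The second term is an average of $n$ i.i.d.\ random variables in $[0,1]$ with mean $F^\ast$, so standard concentration (Chebyshev with variance $\le 1/(4n)$, or Hoeffding) handles it.

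\paragraph{First bound (via the second moment).}
Apply eq.~\eqref{eq:our-var} to get $\E[\Delta_\bs(M)^2] \le 16\gamma^2 + 2/n$, then Chebyshev with confidence $\delta/2$ to conclude $\Delta_\bs(M) \le O(\gamma/\sqrt{\delta} + 1/\sqrt{n\delta})$ with probability at least $1-\delta/2$. Control the optimization gap $F_\bs(w^\ast)-F^\ast$ at confidence $\delta/2$ either by Chebyshev (using $\mathrm{Var} \le 1/(4n)$) or Hoeffding, giving a contribution of order $1/\sqrt{n}$ (up to a $\sqrt{\log(1/\delta)}$ absorbed into $c_1$). A union bound and the substitution $\gamma = 4/(\lambda n)$ yield the claimed $c_1\bigl(1/(\sqrt{\delta}\lambda n) + 1/\sqrt{n}\bigr)$.

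\paragraph{Second bound (high-probability).}
Apply eq.~\eqref{eq:our-hp} to $\Delta_\bs(M)$ at confidence $\delta/2$:
\[
\Delta_\bs(M) \;\le\; 8\sqrt{(2\gamma + 1/n)\,\ln(16/\delta)} \;=\; O\!\Bigl(\sqrt{\tfrac{\ln(1/\delta)}{\lambda n}}\Bigr),
\]
using $\gamma = 4/(\lambda n)$ (the $1/n$ inside the square root is swallowed by $2\gamma$ whenever $\lambda \le 8$, and the complementary regime is handled trivially by $\ell \in [0,1]$). Bound $F_\bs(w^\ast) - F^\ast$ at confidence $\delta/2$ by Hoeffding, yielding $O(\sqrt{\ln(1/\delta)/n})$, which is dominated by the same $\sqrt{\ln(1/\delta)/(\lambda n)}$ term. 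Union bound gives the second claim.

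\paragraph{Main obstacle.} The conceptual work is essentially done by Theorem~\ref{thm:main}; the remaining effort is bookkeeping. The only mildly delicate step is matching exactly the $1/\sqrt{n}$ factor in the first bound rather than the $1/\sqrt{n\delta}$ that pure Chebyshev on the optimization gap would produce -- this forces using Hoeffding on $F_\bs(w^\ast) - F^\ast$ and absorbing a $\sqrt{\log(1/\delta)}$ into the constant $c_1$. Choosing the right split of $\delta$ between the two failure events, and verifying the regime assumption $\lambda \le O(1)$ (outside of which the conclusion is trivial since the excess loss is bounded by $1$), are the only other minor points to handle.
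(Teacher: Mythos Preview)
Your plan is correct and is exactly what the paper does: the paper's entire justification is the one-line remark that the first display follows from applying Chebyshev's inequality to eq.~\eqref{eq:our-var}, and your decomposition through $F_\bs(w^\ast)-F^\ast$ together with Hoeffding is the standard (and implicit) way to pass from the estimation error to the excess risk.

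One caveat on the obstacle you flagged: you cannot literally absorb a $\sqrt{\ln(1/\delta)}$ factor into a \emph{fixed} constant $c_1$, since $c_1$ is supposed to be independent of $\delta$. In fact Chebyshev on eq.~\eqref{eq:our-var} already produces a $1/\sqrt{n\delta}$ contribution from the $2/n$ term (not $1/\sqrt{n}$), so the $1/\sqrt{n}$ in the corollary is either a minor informality in the paper or should be read as $1/\sqrt{n\delta}$; the paper's one-line proof does not address this either. This is not a gap in your argument---just do not try to force the exact form.
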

The first part of this corollary follows directly from applying Chebyshev's inequality to eq.~\eqref{eq:our-var} in Theorem \ref{thm:main}.
To apply our results in the setting of Corollary \ref{cor:ssss-general} we will use a different choice of $\lambda$ to minimize the error. Specifically, we will choose $\lambda = c/\sqrt{\sqrt{\delta} n}$ for some constant $c$ when using the second moment and $\lambda = c/n^{2/3}$ when using the high probability result.
\begin{cor}
\label{cor:convex-general}
In the setting of Corollary \ref{cor:ssss-general} with appropriate choices of $\lambda$ and some fixed constants $c_1$ and $c_2$:
$$\pr_{\bs\sim \cP^n} \lb F_\cP(w_{\bs,\lambda}) \geq F^\ast + \frac{c_1}{\delta^{1/4}\sqrt{n}}  \rb \leq \delta ,$$ and
$$\pr_{\bs\sim \cP^n} \lb F_\cP(w_{\bs,\lambda}) \geq F^\ast + \frac{c_2 \sqrt{\ln (1/\delta)}}{n^{1/3}}\rb \leq \delta .$$
\end{cor}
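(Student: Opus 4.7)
The plan is to control the excess loss $F_\cP(w_{\bs,\lambda}) - F^\ast$ via the standard three-term decomposition
\[ F_\cP(w_{\bs,\lambda}) - F^\ast = \Delta_\bs(\ell) + \big(F_\bs(w_{\bs,\lambda}) - F_\bs(w^\ast)\big) + \big(F_\bs(w^\ast) - F^\ast\big), \]
where $w^\ast$ is any fixed population optimum and $\Delta_\bs(\ell) \doteq F_\cP(w_{\bs,\lambda}) - F_\bs(w_{\bs,\lambda})$ is the estimation error of the data-dependent function $M(\bs,z) = \ell(w_{\bs,\lambda},z)$, which has range in $[0,1]$. Since $w_{\bs,\lambda}$ minimizes the regularized empirical objective and $\|w^\ast\|\leq 1$, optimality immediately yields $F_\bs(w_{\bs,\lambda}) - F_\bs(w^\ast) \leq \lambda/2$. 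The final term involves only the fixed vector $w^\ast$, so Hoeffding's inequality applied to the i.i.d.\ average $F_\bs(w^\ast)$ bounds it by $O(\sqrt{\log(1/\delta)/n})$ with probability at least $1-\delta/2$.

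The estimation error $\Delta_\bs(\ell)$ is then handled by combining Theorem~\ref{thm:ssss} (the regularized ERM has uniform stability $\gamma = 4/(\lambda n)$ with respect to $\ell$) with Theorem~\ref{thm:main}. For the first bound, I would invoke eq.~\eqref{eq:our-var} together with Chebyshev's inequality to obtain $|\Delta_\bs(\ell)| \leq O\big(\sqrt{(\gamma^2 + 1/n)/\delta}\big) = O\big(1/(\lambda n\sqrt{\delta}) + 1/\sqrt{n\delta}\big)$ with probability at least $1-\delta/2$. Summing all three pieces gives a total excess loss of $O\big(1/(\lambda n\sqrt{\delta}) + \lambda + 1/\sqrt{n\delta} + \sqrt{\log(1/\delta)/n}\big)$; balancing $1/(\lambda n\sqrt{\delta})$ against $\lambda$ yields $\lambda \sim 1/(n^{1/2}\delta^{1/4})$ and recovers the claimed rate $O(1/(\delta^{1/4}\sqrt n))$.

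For the second bound I would instead apply eq.~\eqref{eq:our-hp}, obtaining $\Delta_\bs(\ell) \leq O\big(\sqrt{(\gamma + 1/n)\log(1/\delta)}\big) = O\big(\sqrt{\log(1/\delta)/(\lambda n)} + \sqrt{\log(1/\delta)/n}\big)$ with probability at least $1-\delta/2$. The total excess loss becomes $O\big(\sqrt{\log(1/\delta)/(\lambda n)} + \lambda + \sqrt{\log(1/\delta)/n}\big)$, and choosing $\lambda$ of order $n^{-1/3}$ so as to balance the first two summands delivers the claimed $O(\sqrt{\log(1/\delta)}/n^{1/3})$ rate.

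Both statements reduce to the same recipe: a deterministic bias-plus-estimation-error decomposition, one direct application of Theorem~\ref{thm:main} enabled by the $4/(\lambda n)$ stability of regularized ERM, and a single-variable optimization over $\lambda$. There is no real technical obstacle here, since the heavy lifting lives in Theorem~\ref{thm:main} itself; the only care required is a union bound across the stability-based generalization event and the Hoeffding event for the fixed predictor $w^\ast$, and picking $\lambda$ so that the stability-induced error and the regularization bias $\lambda/2$ are of the same order.
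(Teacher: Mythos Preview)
Your approach is exactly the one the paper sketches: plug the $4/(\lambda n)$ uniform stability of regularized ERM into Theorem~\ref{thm:main}, add the $\lambda/2$ regularization bias, and optimize $\lambda$. The paper's entire argument for Corollary~\ref{cor:convex-general} is the sentence preceding it (``we will choose $\lambda = c/\sqrt{\sqrt{\delta} n}$ \ldots and $\lambda = c/n^{2/3}$''), so your decomposition with the explicit Hoeffding step for the fixed $w^\ast$ is, if anything, more complete. Your choice $\lambda \sim n^{-1/3}$ for the high-probability bound is the correct balancing of $\sqrt{1/(\lambda n)}$ against $\lambda$; the paper's $\lambda = c/n^{2/3}$ appears to be a typo.

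One small inconsistency worth flagging in your first bound: the sum you write down contains the term $1/\sqrt{n\delta}$ coming from the $2/n$ part of the second-moment bound via Chebyshev, and for small $\delta$ this term is larger than the $1/(\delta^{1/4}\sqrt{n})$ you claim to recover. The paper's Corollary~\ref{cor:strongly-convex} writes this residual as $1/\sqrt{n}$ rather than $1/\sqrt{n\delta}$, which is why it disappears there; either way the discrepancy is in the $\delta$-dependence of the lower-order term, not in the method, and your argument is otherwise aligned with the paper's.
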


\paragraph{Gradient descent on smooth functions}
We now recall the results of \citet{HardtRS16} for convex and smooth functions. These results derive their guarantees from the fact that a gradient step on a sufficiently smooth loss function is non-expansive. That is, for any pair of points $w$ and $w'$, any $\beta$-smooth convex function $f$, and $0 \leq \eta \leq 2/\beta$, $$\|(w - \eta \nabla f(w)) - (w' - \eta \nabla f(w'))\|_2 \leq   \|w - w'\|_2 .$$
Projection to a convex body is also non-expansive. This implies that the effect of each datapoint $s_i$ on the loss of the solution can be bounded by $\sum_t \eta_{t,i} \|\nabla \ell(w_t,s_i) \|_2 \leq \sum_t \eta_{t,i}$, where $\eta_{t,i}$ is the rate with which point $s_i$ is used at step $t$. Hence this analysis can be used for a variety of versions of gradient descent with different rates, arbitrary batch sizes and multiple passes over the data. The analysis in \citep{HardtRS16} considers SGD and proves uniform stability of the expectation of the loss over the randomness of the algorithm. The analysis extends easily to full gradient decent and other deterministic versions of gradient descent
 (details can, for example, be found in \citep{chen2018stability,FeldmanV:19}). For most of such algorithms no alternative analyses of estimation error are known. It also means that the estimation error can be bounded without any assumptions on how close the output of the algorithm to the empirical minimum.

For concreteness, we apply this technique to projected gradient descent on the empirical objective. Unlike for single-pass algorithms, we are not aware of any other approaches to proving generalization guarantees for this algorithm. For an integer $T$, and dataset $\bs$, let PGD$_{T}(\bs)$ denote the output of the algorithm that starting from $w_0$ being the origin, performs the following iterative updates for every $t\in [T]$: $$w_{t+1} \leftarrow \proj_\K\lp w_t - \fr{\sqrt{T}} \nabla F_\bs(w_t) \rp ,$$  and $\proj_\K$ denotes projection to $\K$. The algorithm returns $w_T$. The following theorem summarizes the standard convergence properties of gradient descent (e.g. \citep{Bubeck15}), the stability bound mentioned above and the corollary for generalization error based on eq.~\eqref{eq:exp}.
\begin{thm}
\label{thm:hrs}
Let $\K \subseteq \B_2^d(1)$ be a convex body, $\F = \{\ell(\cdot, z) \cond z\in Z\}$ be a family of convex  $1$-Lipschitz and $\sigma$-smoooth loss functions over $\K$ with range in $[0,1]$. For an integer $T$ and a dataset $\bs \in Z^n$, let $\bar{w}_{\bs,T} =\mbox{PGD}_{T}(\bs)$. If $\sigma \leq 2\sqrt{T}$ then $\mbox{PGD}_{T}(\bs)$ has uniform stability $\sqrt{T}/n$ with respect to loss $\ell$. Further,
$$F_\bs(\bar{w}_{\bs,T}) \leq \min_{w \in \K} F_\bs(w) + \frac{2}{\sqrt{T}} .$$
and for every distribution $\cP$ over $Z$:
$$\E_{\bs\sim \cP^n} \lb F_\cP(\bar{w}_{\bs,T}) \rb \leq F^\ast + \frac{2}{\sqrt{T}} + \frac{\sqrt{T}}{n}.$$
\end{thm}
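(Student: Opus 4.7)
The plan is to verify the three claims of the theorem separately: the convergence bound on the empirical objective, the uniform stability bound, and the expected generalization bound.

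For the empirical convergence bound $F_\bs(\bar w_{\bs,T}) \leq \min_{w\in\K} F_\bs(w) + 2/\sqrt{T}$, I would invoke the standard analysis of projected gradient descent on convex $\sigma$-smooth functions with a constant step size $\eta = 1/\sqrt{T}$ (as in \citep{Bubeck15}). Since $\sigma \leq 2\sqrt T$ we have $\eta \leq 2/\sigma$, which makes each gradient step non-expansive, so the last iterate satisfies the standard rate $F_\bs(w_T) - F_\bs(w^\ast_\bs) \leq \|w_0 - w^\ast_\bs\|^2/(2\eta T)$; combined with $\|w_0 - w^\ast_\bs\| \leq 2$ this yields the claimed bound (up to the constant).

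For the uniform stability bound, I would follow the coupling argument of Hardt--Recht--Singer. Take two datasets $\bs,\bs'$ differing at a single index $k$ and couple the two PGD runs starting from the common origin $w_0$. Let $w_t,w_t'$ denote the trajectories. Projection is $1$-Lipschitz, so
\[ \|w_{t+1} - w'_{t+1}\| \leq \|w_t - w'_t - \eta (\nabla F_\bs(w_t) - \nabla F_{\bs'}(w'_t))\|. \]
Write $\nabla F_\bs(w_t) - \nabla F_{\bs'}(w'_t) = (\nabla F_{\bs'}(w_t) - \nabla F_{\bs'}(w'_t)) + \tfrac{1}{n}(\nabla \ell(w_t,s_k) - \nabla \ell(w_t,s'_k))$ and apply the triangle inequality. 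The first piece is non-expansive because $F_{\bs'}$ is convex and $\sigma$-smooth and $\eta \leq 2/\sigma$, while the second is bounded by $2\eta/n$ since $\ell(\cdot, z)$ is $1$-Lipschitz. Hence $\|w_{t+1} - w'_{t+1}\| \leq \|w_t - w'_t\| + 2\eta/n$, and unrolling for $T$ steps gives $\|w_T - w'_T\| \leq 2\sqrt T/n$. One final invocation of $1$-Lipschitzness converts this into the stated stability $O(\sqrt T/n)$ of the loss.

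Finally, for the expected generalization bound, I would use the decomposition
\[ F_\cP(\bar w_{\bs,T}) - F^\ast = \Delta_\bs(\ell(A)) + \bigl(F_\bs(\bar w_{\bs,T}) - F_\bs(w^\ast)\bigr) + \bigl(F_\bs(w^\ast) - F_\cP(w^\ast)\bigr), \]
where $w^\ast$ is any fixed minimizer of $F_\cP$ independent of $\bs$. Taking expectation over $\bs \sim \cP^n$: the third term vanishes by independence; the second is at most $2/\sqrt T$ by the first part, since $F_\bs(w^\ast_\bs) \leq F_\bs(w^\ast)$; and the first term is at most $\sqrt T/n$ by eq.~\eqref{eq:exp} applied with the stability bound established above. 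The main obstacle is really the stability argument, since it requires simultaneously exploiting smoothness (for non-expansiveness), convexity, and Lipschitzness in the right way; the rest is routine algebra once stability is in hand.
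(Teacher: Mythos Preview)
Your proposal is correct and matches the paper's approach. The paper does not give a detailed proof of this theorem; it simply records that the three ingredients are (i) the standard PGD convergence bound from \citep{Bubeck15}, (ii) the Hardt--Recht--Singer non-expansiveness/coupling argument for stability (sketched in the paragraph preceding the theorem), and (iii) eq.~\eqref{eq:exp} for the expected generalization gap --- exactly the three pieces you assemble, and in the same way. The only discrepancy is cosmetic: your coupling recursion yields $\|w_T - w_T'\| \le 2\eta T/n = 2\sqrt{T}/n$, whence loss stability $2\sqrt{T}/n$ rather than the $\sqrt{T}/n$ stated in the theorem; you already flag this as a constant, and it does not affect any downstream conclusion.
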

To minimize the expected true loss the algorithm needs to be used with $T=n/\sqrt{2}$, which implies that the stability parameter is $\Omega(1/\sqrt{n})$. We remark that in this case even ``low-probability" generalization results cannot be obtained directly from the bound on the expectation of the true loss.

Applying eq.~\eqref{eq:our-var} with Chebyshev's inequality to the results of Theorem \ref{thm:hrs} gives that for some constant $c_1$ and every $\delta >0$:
$$\pr_{\bs\sim \cP^n} \lb F_\cP(\bar{w}_{\bs,T}) \geq F^\ast + \frac{2}{\sqrt{T}} + \frac{c_1}{\sqrt{\delta}} \lp  \frac{\sqrt{T}}{n} + \fr{\sqrt{n}} \rp \rb \leq \delta. $$ At the same time eq.~\eqref{eq:our-hp} gives (for some constant $c_2$):
$$\pr_{\bs\sim \cP^n} \lb F_\cP(\bar{w}_{\bs,T}) \geq F^\ast + \frac{2}{\sqrt{T}} +  \frac{c_2 T^{1/4}  \sqrt{\log(1/\delta)}}{\sqrt{n}}  \rb \leq \delta. $$
By optimizing the choice of $T$ we can get essentially the same rates as we have obtained for the ERM in Corollary \ref{cor:convex-general} (although in this case we need a smoothness assumption).
\begin{cor}
\label{cor:smooth}
In the setting of Theorem \ref{thm:hrs} with appropriate choices of $T$, for every distribution $\cP$ over $Z$, $\delta > 0$, some fixed constants $c_1$ and $c_2$:
$$\pr_{\bs\sim \cP^n} \lb F_\cP(\bar{w}_{\bs,T}) \geq F^\ast + \frac{c_1}{\delta^{1/4}\sqrt{n}}  \rb \leq \delta ,$$ and
$$\pr_{\bs\sim \cP^n} \lb F_\cP(\bar{w}_{\bs,T}) \geq F^\ast + \frac{c_2 \sqrt{\ln (1/\delta)}}{n^{1/3}}\rb \leq \delta .$$
\end{cor}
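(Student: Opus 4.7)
The plan is to simply optimize the two intermediate bounds for $F_\cP(\bar{w}_{\bs,T})$ that are already established in the paragraph preceding the corollary. Those bounds follow from applying the two parts of Theorem~\ref{thm:main} to the data-dependent function $M(\bs,z) = \ell(\bar w_{\bs,T}, z)$, which by Theorem~\ref{thm:hrs} has uniform stability $\gamma = \sqrt{T}/n$, and combining the resulting deviation bound with the empirical optimization guarantee $F_\bs(\bar{w}_{\bs,T}) - \min_{w\in\K} F_\bs(w) \leq 2/\sqrt{T}$. The only remaining task is to choose $T$ to balance the empirical error term $2/\sqrt{T}$ against the generalization term.

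For the first inequality in the corollary, I would choose $T$ to balance $2/\sqrt{T}$ against $(c_1/\sqrt{\delta})\cdot \sqrt{T}/n$, which sets $T \asymp \sqrt{\delta}\, n$. Substituting back, both terms become $\Theta(1/(\delta^{1/4}\sqrt{n}))$, and the leftover term $c_1/(\sqrt{\delta}\sqrt{n})$ coming from the $1/\sqrt{n}$ summand is dominated by $1/(\delta^{1/4}\sqrt{n})$ (up to a constant, using $\sqrt{\delta} \leq 1$). Combining gives the first bound $c_1/(\delta^{1/4}\sqrt{n})$.

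For the second inequality, I would balance $2/\sqrt{T}$ against $c_2 T^{1/4}\sqrt{\log(1/\delta)}/\sqrt{n}$, which gives $T \asymp (n/\log(1/\delta))^{2/3}$. Plugging back in, both terms reduce to $\Theta((\log(1/\delta))^{1/3}/n^{1/3})$, which is upper bounded by $c_2 \sqrt{\ln(1/\delta)}/n^{1/3}$ whenever $\ln(1/\delta) \geq 1$ (and the inequality is vacuous or a constant adjustment otherwise, which can be absorbed into $c_2$). This yields the second bound.

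The only subtlety is the standing requirement $\sigma \leq 2\sqrt{T}$ in Theorem~\ref{thm:hrs}; since $\sigma$ is a fixed smoothness constant of the function family, both of our choices of $T$ satisfy this for all sufficiently large $n$, and any low-$n$ regime can be absorbed into the constants $c_1, c_2$ (or handled by taking $T$ at least $\sigma^2/4$, which only weakens the bound by a constant). I do not anticipate any real obstacle: the work of applying Theorem~\ref{thm:main} has already been done in the preceding paragraph, so this corollary is purely an optimization-of-parameters exercise.
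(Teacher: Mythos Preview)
Your approach is exactly the paper's: the corollary is stated as a consequence of the two displayed bounds preceding it (``By optimizing the choice of $T$ we can get essentially the same rates\ldots''), and your balancing choices $T \asymp \sqrt{\delta}\,n$ and $T \asymp (n/\log(1/\delta))^{2/3}$ are the right ones. The second part of your argument is fine.

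There is one slip in the first part. You claim the residual term $c_1/(\sqrt{\delta}\sqrt{n})$, coming from the $1/\sqrt{n}$ summand in the Chebyshev bound, is dominated by $1/(\delta^{1/4}\sqrt{n})$ ``using $\sqrt{\delta}\le 1$''. The inequality goes the other way: for $\delta\in(0,1)$ we have $\sqrt{\delta}\le \delta^{1/4}$, hence $1/\sqrt{\delta}\ge 1/\delta^{1/4}$, so this term is actually the \emph{larger} of the two and cannot be absorbed. With the choice $T=\sqrt{\delta}\,n$ the bound one really gets is $O\big(\delta^{-1/4}n^{-1/2}+\delta^{-1/2}n^{-1/2}\big)=O\big(\delta^{-1/2}n^{-1/2}\big)$, not $O(\delta^{-1/4}n^{-1/2})$. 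The paper is equally informal at this point (the same issue is already present in the passage from Corollary~\ref{cor:strongly-convex} to Corollary~\ref{cor:convex-general}, where the $1/\sqrt{n}$ term is written without its $1/\sqrt{\delta}$ factor), so the discrepancy is inherited from the statements you are plugging into rather than from your method; but the specific justification you gave for dropping the term is incorrect and should be removed or replaced.
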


\subsection{Privacy-Preserving Prediction}
\label{sec:dp-api}
Our results can also be used to improve the bounds on generalization error of learning algorithms with differentially private prediction. These are algorithms introduced to model  privacy-preserving learning in the settings where users only have black-box access to the model via a prediction interface \citep{DworkFeldman18}.
Formally,
\begin{defn}[\citep{DworkFeldman18}]
\label{def:private-prediction}
Let $K$ be an algorithm that given a dataset $\bs\in (X\times Y)^n$ and a point $x\in X$ produces a value in $Y$. Then  $K$ is {\em  $\eps$-differentially private prediction} algorithm if for every $x \in X$, the output $K(\bs,x)$ is $\eps$-differentially private with respect to $\bs$.
\end{defn}
The properties of differential privacy imply that the expectation over the randomness of $K$ of the loss of $K$ at any point is uniformly stable. Specifically, for every $\eps$-differentially private prediction algorithm, every loss function $\ell\colon Y\times Y \to [0,1]$, two datasets $\bs$ and $\bs'$ that differ in a single element and $(x,y) \in X\times Y$ we have that
$$\E_K[\ell(K(\bs,x),y)] \leq e^\eps \cdot \E_K[\ell(K(\bs',x),y)] .$$
In particular, this implies that
$$\left| \E_K[\ell(K(\bs,x),y)] - \E_K[\ell(K(\bs',x),y)] \right| \leq e^\eps-1 .$$
Therefore our generalization bounds can be applied to the data-dependent function $\E_K[\ell(K(\bs,x),y)]$. This gives the following corollary of Theorem \ref{thm:main}:
\begin{thm}
\label{thm:dp-api}
Let $K:(X\times Y)^n \times X \to Y$ be an $\eps$-differentially private prediction and $\ell\colon Y\times Y \to [0,1]$ be an arbitrary loss function.  For a probability distribution $\cP$ over $Z$ we define:
$$\Delta_{\bs}(\E[\ell(K)]) \doteq \E_{(x,y)\sim \cP, K}[\ell(K(\bs,x),y)] - \fr{n} \sum_{i =1}^n \E_{K}[\ell(K(\bs,x_i),y_i)] .$$
Then for any $\delta \in (0,1)$:
  \equn{\E_{\bs \sim \cP^n} \lb \lp \Delta_{\bs}(\E[\ell(K)])\rp^2 \rb \leq 16(e^\eps-1)^2 + \frac{2}{n};}
  \equn{\pr_{\bs \sim \cP^n} \lb \Delta_{\bs}(\E[\ell(K)]) \geq  8 \sqrt{\lp 2(e^\eps-1) + \frac{1}{n}\rp \cdot \ln (8/\delta)} \rb \leq \delta .}
\end{thm}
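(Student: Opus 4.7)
The plan is to reduce Theorem~\ref{thm:dp-api} to Theorem~\ref{thm:main} by exhibiting a suitable uniformly stable data-dependent function. Specifically, I would define
$$M(\bs,(x,y)) \doteq \E_K\!\left[\ell(K(\bs,x),y)\right],$$
and then verify that (i) $M$ takes values in $[0,1]$, and (ii) $M$ has uniform stability $\gamma = e^\eps - 1$ with respect to changes of a single example in $\bs$. Property (i) is immediate since $\ell$ is bounded in $[0,1]$ and expectation preserves the range. With these two facts in hand, the two claimed inequalities in Theorem~\ref{thm:dp-api} follow by directly plugging $M$ and $\gamma = e^\eps - 1$ into eq.~\eqref{eq:our-var} and eq.~\eqref{eq:our-hp} of Theorem~\ref{thm:main}, noting that the estimation error $\Delta_{\bs}(\E[\ell(K)])$ is precisely $\Delta_{\bs}(M)$.

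The only non-routine step is verifying the uniform stability bound (ii). For any two datasets $\bs,\bs' \in (X\times Y)^n$ that differ in a single element and any fixed $(x,y) \in X \times Y$, the definition of $\eps$-differentially private prediction (Def.~\ref{def:private-prediction}) asserts that the distribution of $K(\bs,x)$ and $K(\bs',x)$ satisfy the multiplicative closeness property of Def.~\ref{def:DP}. Since $\ell(\cdot,y) \in [0,1]$ is a (nonnegative, bounded) test function on $Y$, the standard consequence of $\eps$-differential privacy for expectations of bounded nonnegative functions yields
$$\E_K[\ell(K(\bs,x),y)] \leq e^\eps \cdot \E_K[\ell(K(\bs',x),y)],$$
and symmetrically with $\bs$ and $\bs'$ swapped. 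Combining this with the trivial upper bound $\E_K[\ell(K(\bs',x),y)] \leq 1$ gives
$$\bigl| M(\bs,(x,y)) - M(\bs',(x,y)) \bigr| \leq e^\eps - 1,$$
establishing the stability claim.

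With $M$ being uniformly $(e^\eps-1)$-stable and $[0,1]$-valued, Theorem~\ref{thm:main} applied verbatim yields
$$\E_{\bs \sim \cP^n}\!\left[(\Delta_{\bs}(M))^2\right] \leq 16(e^\eps-1)^2 + \tfrac{2}{n}$$
and the corresponding high-probability bound, which are exactly the two statements of Theorem~\ref{thm:dp-api}. I do not anticipate a main obstacle here: the only substantive content is the observation that differential privacy of the prediction implies uniform stability of the expected loss, which is already essentially stated in the informal discussion preceding the theorem. The proof is therefore a clean specialization of our main result, and its main value is in packaging a ready-to-use generalization guarantee for differentially private prediction algorithms.
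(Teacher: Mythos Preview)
Your proposal is correct and is essentially identical to the paper's argument: the paper explicitly defines the data-dependent function $M(\bs,(x,y)) = \E_K[\ell(K(\bs,x),y)]$, derives the uniform stability bound $e^\eps-1$ from the multiplicative guarantee of differential privacy together with the range $[0,1]$, and then states Theorem~\ref{thm:dp-api} as a direct corollary of Theorem~\ref{thm:main}. There is no additional content in the paper's proof beyond what you have written.
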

These bounds are stronger than those obtained in \citep{DworkFeldman18} in several parameter regimes (but are more generally incomparable since bounds in \citep{DworkFeldman18} are multiplicative).


\else (see supplemental material for the proof).
\fi

\iffull
\ifarxiv
\bibliographystyle{plainnat}
\bibliography{vf-allrefs-local,stable}
\else
\printbibliography
\fi
\else
\bibliographystyle{plainnat}
\bibliography{vf-allrefs-local,stable}
\fi

\end{document}